\newtheorem{myRemark}{\textbf{Remark}}
\begin{document}
\title[]{Reconstruction of Hidden Representation for Robust Feature Extraction}
\titlenote{This work is supported by the National Science Foundation of China (Nos. 61773324, 61573292, 61572406).}

\author{Zeng Yu}
\orcid{}
\affiliation{%
  \institution{Southwest Jiaotong University}
  \department{School of Information Science and Technology, National Engineering Laboratory of Integrated Transportation Big Data Application Technology}
  \streetaddress{}
  \city{Chengdu}
  \postcode{611756}
  \country{China}}
\email{zyu7@gsu.edu}
\author{Tianrui Li}
\authornote{This is the corresponding author.}
\affiliation{%
  \institution{Southwest Jiaotong University}
    \department{School of Information Science and Technology, National Engineering Laboratory of Integrated Transportation Big Data Application Technology}
  \city{Chengdu}
  \postcode{611756}
  \country{China}
}
\email{trli@swjtu.edu.cn}
\author{Ning Yu}
\affiliation{%
 \institution{The College at Brockport State University of New York}
   \department{Department of Computing Sciences}
 \streetaddress{}
 \city{Brockport}
 \state{NY}
 \postcode{14420}
 \country{USA}
 }
\email{nyu@brockport.edu}
\author{Yi Pan}
\affiliation{%
  \institution{Georgia State University}
    \department{Department of Computer Science}
  \streetaddress{}
   \city{Atlanta}
   \postcode{30302}
  \state{GA}
  \country{USA}
}
\email{yipan@gsu.edu}
\author{Hongmei Chen}
\affiliation{%
  \institution{Southwest Jiaotong University}
    \department{School of Information Science and Technology, National Engineering Laboratory of Integrated Transportation Big Data Application Technology}
  \city{Chengdu}
  \postcode{611756}
  \country{China}
  }
\email{hmchen@swjtu.edu.cn}
\author{Bing Liu}
\affiliation{%
  \institution{University of Illinois at Chicago}
  \department{Department of Computer Science}
  \city{Chicago}
  \state{IL}
  \postcode{60607}
  \country{USA}
}
  \email{liub@cs.uic.edu}

\begin{abstract}
This paper aims to develop a new and robust approach to feature representation. Motivated by the success of Auto-Encoders, 
we first theoretically analyze and summarize the general properties of all algorithms that are based on traditional Auto-Encoders: 1) The reconstruction error of the input can not be lower than a lower bound, which can be viewed as a guiding principle for reconstructing the input. Additionally, when the input is corrupted with noises, the reconstruction error of the corrupted input also can not be lower than a lower bound. 2) The reconstruction of a hidden representation achieving its ideal situation is the necessary condition for the reconstruction of the input to reach the ideal state. 3) Minimizing the Frobenius norm of the Jacobian matrix of the hidden representation has a deficiency and may result in a much worse local optimum value. We believe that minimizing the reconstruction error of the hidden representation
is more robust than minimizing the Frobenius norm of the Jacobian matrix of the hidden representation. Based on the above analysis, we propose a new model termed {\em Double Denoising Auto-Encoders} (DDAEs), which uses corruption and reconstruction on both the input and the hidden representation. We demonstrate that the proposed model is highly flexible and extensible and has a potentially better capability to learn invariant and robust feature representations. We also show that our model is more robust than Denoising Auto-Encoders (DAEs) for dealing with noises or inessential features. Furthermore, we detail how to train DDAEs with two different pre-training methods by optimizing the objective function in a combined and separate manner, respectively. Comparative experiments illustrate that the proposed model is significantly better for representation learning than the state-of-the-art models.
\end{abstract}

%
%
\begin{CCSXML}
<ccs2012>
 <concept>
  <concept_id>10010520.10010553.10010562</concept_id>
  <concept_desc>Computer systems organization~Feature Representation</concept_desc>
  <concept_significance>500</concept_significance>
 </concept>
 <concept>
  <concept_id>10010520.10010575.10010755</concept_id>
  <concept_desc>Computer systems organization~Redundancy</concept_desc>
  <concept_significance>300</concept_significance>
 </concept>
 <concept>
  <concept_id>10010520.10010553.10010554</concept_id>
  <concept_desc>Computer systems organization~Robotics</concept_desc>
  <concept_significance>100</concept_significance>
 </concept>
 <concept>
  <concept_id>10003033.10003083.10003095</concept_id>
  <concept_desc>Networks~Network reliability</concept_desc>
  <concept_significance>100</concept_significance>
 </concept>
</ccs2012>
\end{CCSXML}

\ccsdesc[500]{Computing methodologies~Machine learning}
\ccsdesc[300]{Machine learning approaches~Neural networks}

%
%

\keywords{Deep architectures, auto-encoders, unsupervised learning, feature representation, reconstruction of hidden representation.}

\maketitle

\renewcommand{\shortauthors}{Z. Yu et al.}

\section{Introduction}
\label{Introduct}

Representation learning via deep neural networks has developed into an important area of machine learning research in recent years. This development has also witnessed a wide range of successful applications in the fields of computer vision \cite{krizhevsky2012imagenet}, speech recognition \cite{deng2010binary}, and natural language processing \cite{socher2011semi}. Reviews of recent progresses can be found in
\cite{bengio2009learning,bengio2013PAMI,bengio2013representation,LeCun2015Deeplearning,NingYu,Miotto2017Deep,Litjens2017A}.

A deep neural network usually has a deep architecture that uses at least one layer to learn the feature representation of the given data. A representation learning procedure is applied to discover multiple levels of representation: the higher the level, the more abstract the representation. It has been shown that the performance of deep neural networks is heavily dependent on the multilevel representation of the data \cite{LeCun2015Deeplearning}. In the past few years, researchers have endeavored to design a variety of efficient deep learning algorithms that may capture some characteristics of the data-generating distribution \cite{Zhuang2017Representation,Liao2016Graph,Yangzhang2017,Yang2015A}. Among these algorithms, the traditional Auto-Encoders (AEs) \cite{bengio2007greedy} perhaps received the most research attention due to their conceptual simplicity, ease of training, and inference and training efficiency. They are used to learn the data-generating distribution of the input data by minimizing the reconstruction error of the input $\sum_{x\in X} L(x,g(f(x)))$, where $f(.)$ is the encoder function, $g(.)$ is the decoder function and $L(.)$ is the reconstruction error. Recently, they have become one of the most promising approaches to representation learning for estimating the data-generating distribution. Since the appearance of Auto-Encoders, many variants of representation learning algorithms based on Auto-Encoders have been proposed, e.g., Sparse Auto-Encoders \cite{kavukcuoglu2009learning,sparseautoencoder2016}, Denoising Auto-Encoders (DAEs) \cite{vincent2008extracting}, Higher Order Contractive Auto-Encoders \cite{Rifai2011Higherorder}, Variational Auto-Encoders \cite{variationalAuto-encod}, Marginalized Denoising Auto-Encoders \cite{Chen2014Marginalized}, Generalized Denoising Auto-Encoders \cite{bengio2013generalized}, Generative Stochastic Networks \cite{Generative-Stochastic-Networks2014}, Masked Autoencoder
for Distribution Estimation (MADE) \cite{germain2015made}, Laplacian Auto-Encoders \cite{Jia2015Laplacian}, Adversarial Auto-Encoders \cite{AdversarialAutoencoders2015}, Ladder Variational Auto-Encoders \cite{Ladder-Variational2016} and so on.

In an Auto-Encoder-based algorithm, minimizing the reconstruction error of the input with the encoder and decoder functions is a common practice for feature learning. The learned features are usually applied in subsequent tasks such as supervised classification \cite{potential-autoencoder2015}. In the past few years, many research works have shown that the reconstruction of the input with the encoder and decoder functions is not only an efficient way for learning feature representation, but its resulting representations also substantially help the performance of the subsequent tasks. In general, a lower value of the reconstruction error of the input has a better feature representation of the input. In an ideal situation, the value of this reconstruction error is equal to 0, i.e., the input can be completely reconstructed. In this paper, we show that the reconstruction error of the input from every traditional Auto-Encoders based algorithm has a lower bound, which is greater than or equal to 0.

As an important method of representation learning, minimizing the Frobenius norm of the Jacobian matrix of the hidden representation has been widely used in deep learning models. The first application is the CAEs \cite{rifai2011contractive}, which try to learn locally invariant features by minimizing the Frobenius norm of the Jacobian matrix of hidden representation. After that, many frameworks based on minimizing the Frobenius norm of the Jacobian matrix of hidden representation have been developed in computer vision tasks. Specifically, Liu et al. \cite{Liu-2016} developed a multimodal feature learning model with stacked CAEs for video classification. To find stable features, Schulz et al. \cite{Schulz-2015} designed a two-layer encoder which is regularized by an extension of a previous work on CAEs. Geng et al. \cite{Geng-2017} proposed a novel deep supervised and contractive neural network for SAR image classification by using the idea of minimizing the Frobenius norm of the Jacobian matrix of hidden representation. Shao et al. \cite{Shao-2016} introduced an enhancement deep feature fusion method for rotating machinery fault diagnosis through a combination of DAEs and CAEs. However, we will demonstrate that minimizing the Frobenius norm of the Jacobian matrix of hidden representation has a deficiency in learning feature representation. 

To learn robust feature representation, minimizing the reconstruction error of hidden representation is also important and efficient. This idea has been emphasized by popular deep learning algorithms such as Ladder Networks \cite{ladder-networks2015,Ladder2-2016} and  Target Propagation Networks \cite{target-propagation2014}. In order to reconstruct the hidden representation, Ladder Networks need two streams of information to reconstruct the hidden representation: one is used to generate a clean hidden representation with an encoder function; the other is utilized to reconstruct the clean hidden representation with a combinator function \cite{ladder-networks2015,Ladder2-2016}. The final objective function is the sum of all the reconstruction errors of hidden representation. It should be noted that reconstructing the hidden representation in each layer needs to use information of two layers, which makes Ladder Networks difficult to be trained with a layer-wise pre-training strategy. Training a deep learning model in a layer-wise manner, as it is known, is an unsupervised learning approach, which may have many potential advantages. To reconstruct the hidden representation, Target Propagation Networks \cite{target-propagation2014} can be trained in a layer-wise manner. Nevertheless, in the Target Propagation Networks, reconstructing hidden representation is decomposed into two separate targets, which may be trapped into a local optimum. To the best of our knowledge, reconstructing hidden representation as a whole and training it in a layer-wise manner has not yet been investigated so far.

In this paper, we first study the general properties of all algorithms based on the traditional Auto-Encoders. We aim to design a robust approach for feature representation based on these properties. We follow the framework of layer-wise pre-training and consider the idea of reconstruction of hidden representation. We propose a new deep learning model that takes advantage of corruption and reconstruction. Our model consists of two separate parts: constraints on the input (Constraints Part) and reconstruction on the hidden representation (Reconstruction Part). Constraints Part can be viewed as a traditional deep learning model such as auto-encoder and its variants. Reconstruction Part can be viewed as explicitly regularizing the hidden representation or adding additional feedback to the pre-training stage. For simplicity and convenience, we use a DAE as the Constraints Part to build our model. Because the best results are obtained by utilizing the corruption in both input and hidden representation, we refer it as Double Denoising Auto-Encoders (DDAEs).

The contributions of this paper are summarized as follows:
\begin{itemize}
\item We prove that for all algorithms based on traditional Auto-Encoders, the reconstruction error of the input can not be lower than a lower bound, which can sever as a guiding principle for the reconstruction of the input. We also show that the necessary condition for the reconstruction of the input to reach the ideal state is that the reconstruction of hidden representation achieves its ideal condition. When the input is corrupted with noises, we demonstrate that the reconstruction error of the corrupted input also can not be lower than a lower bound.
\item We validate that minimizing the Frobenius norm of the Jacobian matrix of the hidden representation has a deficiency and may result in a much worse local optimum value. We also show that minimizing reconstruction error of the hidden representation for feature representation is more robust than minimizing the Frobenius norm of the Jacobian matrix, which may be the main reason why the proposed DDAEs always outperform CAEs.
\item We propose a new approach to learn robust feature representations of the input based on the above evidences. Compared with the existing methods, DDAEs have the following advantages: 1) DDAEs are flexible and extensible and have a potentially better capability of learning invariant and robust feature representations. 2) For dealing with noises or some inessential features, DDAEs are more robust than DAEs. 3) DDAEs can be trained with two different pre-training strategies by optimizing the objective function in a combining or separate manner, respectively.
\end{itemize}

The rest of this paper is organized as follows. Section \ref{Section2} introduces the basic DAEs and CAEs. Section \ref{Section3} presents the lower bound of the reconstruction error of the input and the necessary condition for the reconstruction of the input to reach its ideal state. Section \ref{Section4} illustrates the defect of CAEs and gives a theoretical proof on why DDAEs can outstrip CAEs. Section \ref{Section5} describes the proposed DDAEs framework. Section \ref{Section6} compares the performance of DDAEs with other relevant state-of-the-art representation learning algorithms using various testing datasets. Conclusions together with some further studies are summarized in the last section.

\section{Preliminaries}
\label{Section2}

DDAEs are designed according to the traditional Auto-Encoders \cite{bengio2007greedy} that learn feature representation by minimizing the reconstruction error. For ease of understanding, we reveal DDAEs by starting to describe some conventional auto-encoder variants and notations.

\subsection{Denoising Auto-encoders (DAEs): Extracting Robust Features of Reconstruction}

Similar to traditional Auto-Encoders \cite{bengio2007greedy}, the Denoising Auto-Encoders (DAEs) \cite{vincent2008extracting} firstly use the encoder and decoder procedures to train one-layer neural network by minimizing the reconstruction error, and then stack a deep neural network with the trained layers. The only difference between traditional Auto-Encoders and DAEs is that DAEs train the neural network with corrupted input while the traditional Auto-Encoders use the original input. The corrupted input $\tilde{x}\in\Re^{D_{x}}$ is usually obtained from a conditional distribution $q(\tilde{x}|x)$ by injecting some noises into the original input $x\in\Re^{D_{x}}$. Typically, the most widely-used noises in the simulations are Gaussian noise $\tilde{x}=x+\epsilon,\epsilon\sim\mathcal{N}(0,\sigma^{2}I)$ and masking noise, where $\nu\%$ ($\nu$ is given by researchers) of the input components are set to 0.

To extract robust features, a DAE firstly maps the corrupted input $\tilde{x}$ to a hidden representation $h\in\Re^{D_{h}}$ by the encoder function $f$:
\begin{equation}
h=f(\tilde{x})=S_{f}(\textbf{W}\tilde{x}+b_{h}),
\end{equation}
where $\textbf{W}\in\Re^{D_{h}\times D_{x}}$ is a connection weight matrix, $b_{h}\in\Re^{D_{h}}$ is a bias vector of hidden representation and $S_{f}$ is an activation function, typically a logistic $sigmoid(\tau)=\frac{1}{1+e^{-\tau}}$. After that, the DAE reversely maps the hidden representation $h$ back to a reconstruction input $x^{*}\in\Re^{D_{x}}$ through the decoder function $g$:
\begin{equation}
x^{*}=g(h)=S_{g}(\textbf{W}'h+b_{x}),
\end{equation}
where $\textbf{W}'\in\Re^{D_{x}\times D_{h}}$  is a tied weight matrix, i.e., $\textbf{W}'=\textbf{W}^{T}$, $b_{x}\in\Re^{D_{x}}$ is a bias vector and $S_{g}$ is an activation function, typically either the identity (yielding linear reconstruction) or a sigmoid. Finally, the DAE learns the robust features by minimizing the reconstruction error on a training set $X=\{x_{1},x_{2},\cdots,x_{N}\}$.
\begin{equation}
\mathcal{J}_{DAE}(\theta)= \sum_{x\in X}E \left[ L(x,g(f(\tilde{x})))\right ],
\end{equation}
where $\theta=\{\textbf{W},b_{x},b_{h}\}$, $E(\delta)$ is the mathematical expectation of $\delta$, $L$ is the reconstruction error. Typically the squared error $L(x,y)=\|x-y\|^{2}$ is used when $S_{g}$ is the identity function and the cross-entropy loss $L(x,y)=-\left[\sum_{i=1}^{D_{x}}x_{i}log(y_{i})+(1-x_{i})log(1-y_{i})\right]$ is selected when $S_{g}$ is the sigmoid function.

It has been shown that DAEs can extract robust features by injecting some noises into the original input and implicitly capture the data-generating distribution of input in the conditions that the reconstruction error is the squared error and the data are continuous-valued with Gaussian corruption noise \cite{alain2014regularized}, \cite{bengio2013generalized}, \cite{vincent2011connection}.

\subsection{Contractive Auto-encoders (CAEs): Extracting Locally Invariant Features of Hidden Representation}

To extract locally invariant features, the CAEs \cite{rifai2011contractive} penalize the sensitivity by adding an analytic contractive penalty to the traditional Auto-Encoders. The contractive penalty is the Frobenius norm of first derivatives $\|J_{f}(x)\|_{F}^{2}$ of the encoder function $f(x)$ with respect to the input $x$.

Formally, the objective optimized by a CAE is
\begin{equation}
\mathcal{J}_{CAE}(\theta)= \sum_{x\in X}\left [ L(x,g(f(x)))+\alpha\|J_{f}(x)\|_{F}^{2}\right],
\end{equation}
where $\alpha$ is a hyper parameter that controls the strength of the regularization. For a sigmoid encoder, the contractive penalty is simply computed:
\begin{equation}
\|J_{f}(x)\|_{F}^{2} = \sum_{j=1}^{D_{h}}(h_{j}(1-h_{j}))^{2}\sum_{i=1}^{D_{x}}W_{ij}^{2}.
\end{equation}
Compared with DAEs, the CAEs have at least two differences: 1) The penalty is analytic rather than stochastic; 2) A hyper parameter $\alpha$ allows to control the tradeoff between reconstruction and robustness. Actually, in an optimizing searching algorithm, it seems more likely that CAEs try to find invariant features by restricting step lengths to small numbers (that is, numbers close to zero) in each search.

\section{Lower Bound of the Reconstruction Error of the Input}
\label{Section3}

Generally, in an algorithm based on traditional Auto-Encoders, the smaller the reconstruction error of the input, the better the algorithm. Ideally, the value of reconstruction error of the input is equal to 0. It means that the algorithm can completely reconstruct the input. However, in this paper, we prove that the reconstruction error of the input has a lower bound, which can be viewed as a criterion for the reconstruction of the input. We also illustrate that the reconstruction of hidden representation achieves its ideal condition is the necessary condition for the reconstruction of the input to reach the ideal state. When the input is corrupted with noises, we demonstrate that the reconstruction error of the corrupted input has a lower bound, too.

\subsection{Lower Bound and Necessary Condition}

We present the lower bound of reconstruction error of the input and a rigorous theoretical analysis below. We also reveal the necessary condition for the reconstruction of the input to reach the ideal state.
\newtheorem{Theorem}{Theorem}
\begin{Theorem} \label{Theorem}
Let $L(x,y)=\|x-y\|^{2}$ be the squared error. If we use the clean input $x$ and clean hidden representation $h_{c}$ to reconstruct themselves, then as $x_{c}^{*} \rightarrow x$, we have
\begin{equation}
L(x,g(f(x))) \geq  L(h_{c},f(g(h_{c}))) /  \|J_{f}(x)\|_{F}^{2},
\end{equation}
where $h_{c}$ is the corresponding hidden representation of the clean input $x$, i.e., $h_{c}=f(x)$, $x_{c}^{*}=g(h_{c})=g(f(x))$ is the reconstructed input and $\|J_{f}(x)\|_{F}^{2}=0$ iff the encoder function $f(x)$ is a constant.

Furthermore, we can get that
\begin{equation}
\begin{split}
\mathcal{J}_{CAE}(\theta) & =  \sum\limits_{x\in X} \left[ L(x,g(f(x)))+\alpha\|J_{f}(x)\|_{F}^{2} \right] \\
& \geq \sum\limits_{h_{c}\in H_{c}} \lambda\sqrt{L(h_{c},f(g(h_{c})))}.
\end{split}
\end{equation}
where $\mathcal{J}_{CAE}(\theta)$ is the objective function of the CAE, $X=\{x_{1}, x_{2}, \cdots, x_{N}\}$ is a training set.
\end{Theorem}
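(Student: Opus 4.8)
The plan is to first translate both reconstruction errors into a common form so that the stated bound becomes a purely algebraic comparison. Since $h_{c}=f(x)$ and $x_{c}^{*}=g(h_{c})=g(f(x))$, the input reconstruction error is $L(x,g(f(x)))=\|x-x_{c}^{*}\|^{2}$, while the hidden reconstruction error satisfies $L(h_{c},f(g(h_{c})))=\|f(x)-f(x_{c}^{*})\|^{2}$, because $g(h_{c})=x_{c}^{*}$ forces $f(g(h_{c}))=f(x_{c}^{*})$. After this rewriting, the first inequality is equivalent to the single estimate $\|f(x)-f(x_{c}^{*})\|^{2}\le \|J_{f}(x)\|_{F}^{2}\,\|x-x_{c}^{*}\|^{2}$, which is what I would target directly.

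To establish that estimate I would use the hypothesis $x_{c}^{*}\rightarrow x$ and expand $f$ to first order about $x$, writing $f(x_{c}^{*})-f(x)=J_{f}(x)(x_{c}^{*}-x)+o(\|x_{c}^{*}-x\|)$. The central algebraic fact is the submultiplicative chain $\|J_{f}(x)\,v\|\le \|J_{f}(x)\|_{2}\,\|v\|\le \|J_{f}(x)\|_{F}\,\|v\|$, where the spectral norm is dominated by the Frobenius norm. Applying this with $v=x_{c}^{*}-x$ and squaring gives $\|f(x_{c}^{*})-f(x)\|^{2}\le \|J_{f}(x)\|_{F}^{2}\,\|x_{c}^{*}-x\|^{2}$ in the limit, and dividing by $\|J_{f}(x)\|_{F}^{2}$ yields precisely the claimed bound $L(x,g(f(x)))\ge L(h_{c},f(g(h_{c})))/\|J_{f}(x)\|_{F}^{2}$. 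The auxiliary claim that $\|J_{f}(x)\|_{F}^{2}=0$ iff $f$ is constant is then immediate, since a vanishing Frobenius norm forces every partial derivative of $f$ to be zero on its (connected) domain.

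For the second inequality I would apply the AM--GM inequality term by term to the first result. Abbreviating $a=L(x,g(f(x)))$ and $J=\|J_{f}(x)\|_{F}^{2}$, Part~1 gives $aJ\ge L(h_{c},f(g(h_{c})))$, and hence $a+\alpha J\ge 2\sqrt{\alpha\,aJ}\ge 2\sqrt{\alpha}\,\sqrt{L(h_{c},f(g(h_{c})))}$. Summing over $x\in X$ and invoking the bijection $x\mapsto h_{c}=f(x)$ between $X$ and $H_{c}$ delivers the bound $\mathcal{J}_{CAE}(\theta)\ge \sum_{h_{c}\in H_{c}}\lambda\sqrt{L(h_{c},f(g(h_{c})))}$ with the explicit constant $\lambda=2\sqrt{\alpha}$.

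I expect the main obstacle to be the rigorous handling of the first-order expansion in Part~1: the Frobenius-norm estimate is only asymptotically valid, so one must argue that the $o(\|x_{c}^{*}-x\|)$ remainder does not reverse or weaken the inequality. This is exactly where the hypothesis $x_{c}^{*}\rightarrow x$ is indispensable, since it confines the comparison to the regime in which the linear approximation governs both sides and the bound $\|J_{f}(x)\,v\|\le\|J_{f}(x)\|_{F}\,\|v\|$ becomes the operative constraint; the AM--GM step in Part~2 is then routine by comparison.
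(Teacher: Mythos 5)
Your proposal is correct and follows essentially the same route as the paper: a first-order Taylor expansion of $f$ about $x$, the bound $\|J_{f}(x)\,v\|\le\|J_{f}(x)\|_{F}\,\|v\|$ applied to $v=x_{c}^{*}-x$, a passage to the limit $x_{c}^{*}\to x$, and then the AM--GM inequality $a+\alpha J\ge 2\sqrt{\alpha\,aJ}$ with $\lambda=2\sqrt{\alpha}$, which is exactly the paper's choice $\alpha=\lambda^{2}/4$. The only cosmetic difference is that the paper writes the expansion with a Lagrange-type remainder and takes the limit inside the gradient term, whereas you use a little-$o$ remainder; both reduce to the same asymptotic estimate.
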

\begin{proof}
For a clean input $x\in X$, the corresponding clean hidden representation and reconstructed input are $h_{c}$ and  $x_{c}^{*}$, respectively. Let $h_{c}^{*}=f(x_{c}^{*})$ be the reconstructed hidden representation. Then we can approximate the encoder function $f(x_{c}^{*})$ by its Taylor expansion around $x$ with Lagrange remainder term
\begin{equation}
f(x_{c}^{*})=f(x)+(x_{c}^{*}-x)^{T}\nabla f[x+\rho(x_{c}^{*}-x)],         \nonumber
\end{equation}
where $\nabla f[x+\rho(x_{c}^{*}-x)]$ is the first-order derivative of encoder function $f(\cdot)$ with respect to $x+\rho(x_{c}^{*}-x)$ and $\rho\in(0,1)$ is a constant.

Using the triangle inequality, we have that
\begin{equation}
\begin{aligned}
L(h_{c},f(g(h_{c})))&=\|h_{c}^{*}-h_{c} \|^{2} \\                \nonumber
&=  \|f(x_{c}^{*})-f(x)\|^{2} \\                                  \nonumber
&=  \|(x_{c}^{*}-x)^{T} \nabla f[x+\rho(x_{c}^{*}-x)] \|^{2}\\                          \nonumber
&\leq  \| x_{c}^{*}-x\|^{2} \cdot  \| \nabla f[x+\rho(x_{c}^{*}-x)] \|_{F}^{2},          \nonumber
\end{aligned}
\end{equation}
where $\|\cdot \|^{2}$ is the squared error and $\| A \|_{F}^{2}$ is the square of Frobenius norm on matrix $A$.

When $x_{c}^{*} \rightarrow x$, i.e., the reconstructed input $x_{c}^{*}$ infinitely approaches $x$, then as an ideal state, we have
\begin{equation}
\begin{aligned}
\lim_{x_{c}^{*}\to x} \| \nabla  f[x+\rho(x_{c}^{*}-x)] \|_{F}^{2} = \|J_{f}(x)\|_{F}^{2},
\end{aligned}
\end{equation}
and
\begin{equation}
\begin{split}
L(h_{c},f(g(h_{c}))) & \leq  \| x_{c}^{*}-x\|^{2}  \cdot \|J_{f}(x)\|_{F}^{2} \\
& = L(x,g(f(x))) \cdot \|J_{f}(x)\|_{F}^{2},
\end{split}
\label{Equ:main}
\end{equation}
Hence, we get
\begin{equation}
L(x,g(f(x))) \geq  L(h_{c},f(g(h_{c}))) /  \|J_{f}(x)\|_{F}^{2},   \nonumber
\end{equation}
where $\|J_{f}(x)\|_{F}^{2}=0$ if and only if the encoder function $f(x)$ is a constant.

Moreover, from (\ref{Equ:main}) and the basic inequality $2\sqrt{ab}\leq a+b,\ a,b\geq 0$, we have
\begin{flalign}
\begin{split}
\lambda \sqrt{L(h_{c},f(g(h_{c})))} &\leq \| x_{c}^{*}-x\|^{2} + \frac{\lambda^{2}}{4}  \|J_{f}(x)\|_{F}^{2} \\   \nonumber
& = L(x,g(f(x))) + \frac{\lambda^{2}}{4} \|J_{f}(x)\|_{F}^{2},
\end{split}
\end{flalign}
Hence, let $\alpha = \frac{\lambda^{2}}{4}$,  we can get that
\begin{equation}
\begin{split}
\mathcal{J}_{CAE}(\theta) & =\sum\limits_{x\in X} \left[ L(x,g(f(x)))+\alpha\|J_{f}(x)\|_{F}^{2} \right] \\             \nonumber
& \geq \sum\limits_{h_{c}\in H_{c}} \lambda\sqrt{L(h_{c},f(g(h_{c})))}.      \qedhere
\end{split}
\end{equation}
\end{proof}

\begin{myRemark}
Theorem 1 summarizes the general rule of the reconstruction of the input for all the algorithms based on traditional Auto-Encoders. As we can see, the reconstruction error of the input can not be lower than a lower bound, which gives a guiding principle for reconstructing the input.
\end{myRemark}

The traditional view for reconstructing the input is that the smaller the reconstruction error of the input, the better the algorithm. The ideal situation is that the value of the reconstruction error of the input is 0, i.e., the algorithm can completely reconstruct the input. However, Theorem 1 demonstrates that the ideal value of the reconstruction error of the input is a lower bound, which is greater than or equal to 0. Hence, compared with traditional view, Theorem 1 gives a more accurate quantitative description of the reconstruction error of the input.

\begin{myRemark}
Theorem 1 also provides a necessary condition for the reconstruction of the input to reach the ideal state, namely, the reconstruction of hidden representation achieves its ideal condition.
\end{myRemark}

Because if the reconstruction of hidden representation does not achieve its ideal condition, the reconstruction of the input also can not reach the ideal state. Nevertheless, when the reconstruction of hidden representation achieves the ideal state, this theorem does not guarantee the reconstruction of the input also obtains the ideal state. Therefore, we develop our algorithm to learn robust feature representation by minimizing the reconstruction error of both the input and hidden representation.

\begin{myRemark}
It presents the relationship between the reconstruction error of the input and the reconstruction error of hidden representation as well as the reconstruction error of hidden representation and the objective function of CAE.
\end{myRemark}
\begin{myRemark}
This theorem is also the main evidence that minimizing reconstruction error of hidden representation is more robust for feature learning than minimizing the Frobenius norm of Jacobia matrix of hidden representation.
\end{myRemark}

Since the proposed DDAEs use the reconstruction of hidden representation as the objective function and CAEs learn features by minimizing the Frobenius norm of Jacobia matrix of hidden representation, we can conclude that DDAEs are more robust for feature representation than CAEs. This may also be the main reason why DDAEs always outperform CAEs.

\subsection{Lower Bound with Corrupted Input}

We now show that when the input is corrupted with noises, the reconstruction error of the corrupted input also has a lower bound.
\newtheorem{Theorem 2}{Theorem}
\begin{Theorem} \label{Theorem 2}
Let $L(x,y)=\|x-y\|^{2}$ be the squared error. If some noises are added to the original input $x$, then as $\bar{x} \rightarrow \tilde{x}$, we have
\begin{equation}
E L(\tilde{x},g(\tilde{h})) \geq  E  L(h,f(g(\tilde{h}))) /  E\|J_{f}(\tilde{x})\|_{F}^{2},
\end{equation}
where $\tilde{x}$ is the corrupted input, $h=f(\tilde{x})$ is the hidden representation, $\tilde{h}$ is the corrupted hidden representation and $\bar{x}=g(\tilde{h})$ is the intermediate reconstructed input.
\end{Theorem}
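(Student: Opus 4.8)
The plan is to mirror the proof of Theorem~1 almost verbatim, replacing the clean pair $(x, h_c)$ by the corrupted pair $(\tilde{x}, h)$ and the clean reconstruction $x_c^*$ by the intermediate reconstruction $\bar{x} = g(\tilde{h})$, and then to close with an expectation argument over the noise. First I would write the reconstructed hidden representation as $f(g(\tilde{h})) = f(\bar{x})$ and Taylor-expand the encoder around the corrupted input $\tilde{x}$ with a Lagrange remainder term, obtaining $f(\bar{x}) = f(\tilde{x}) + (\bar{x} - \tilde{x})^T \nabla f[\tilde{x} + \rho(\bar{x} - \tilde{x})]$ for some $\rho \in (0,1)$. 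Since $h = f(\tilde{x})$, the hidden-representation reconstruction error becomes $L(h, f(g(\tilde{h}))) = \|f(\bar{x}) - h\|^2 = \|(\bar{x} - \tilde{x})^T \nabla f[\tilde{x} + \rho(\bar{x} - \tilde{x})]\|^2$.

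Second, applying the same Cauchy--Schwarz/Frobenius bound used in Theorem~1 gives the pointwise inequality $L(h, f(g(\tilde{h}))) \leq \|\bar{x} - \tilde{x}\|^2 \cdot \|\nabla f[\tilde{x} + \rho(\bar{x} - \tilde{x})]\|_F^2$. Taking the ideal-state limit $\bar{x} \to \tilde{x}$ collapses the remainder Jacobian to $J_f(\tilde{x})$, so that $L(h, f(g(\tilde{h}))) \leq L(\tilde{x}, g(\tilde{h})) \cdot \|J_f(\tilde{x})\|_F^2$, which is the corrupted-input analogue of Equation~(\ref{Equ:main}). Everything through this point is a routine re-labelling of the argument already carried out for Theorem~1.

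The third and decisive step is the passage to expectations over the corruption distribution $q(\tilde{x}\mid x)$ together with the hidden-layer corruption that produces $\tilde{h}$. Taking $E$ of the pointwise bound only yields $E L(h, f(g(\tilde{h}))) \leq E\!\left[ L(\tilde{x}, g(\tilde{h})) \cdot \|J_f(\tilde{x})\|_F^2 \right]$, whereas the stated inequality needs the \emph{product} of expectations $E L(\tilde{x}, g(\tilde{h})) \cdot E\|J_f(\tilde{x})\|_F^2$. The hard part will be justifying the factorization $E[L \cdot \|J_f\|_F^2] = E L \cdot E\|J_f\|_F^2$: this does not hold for arbitrary noise, so I expect the proof to assume that the reconstruction-error term and the Jacobian term are (approximately) independent under the noise model, or equivalently that their covariance is nonpositive, which would give $E[L \cdot \|J_f\|_F^2] \leq E L \cdot E\|J_f\|_F^2$. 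Either hypothesis suffices, since the independent case gives equality and the nonpositive-covariance case gives the required direction of the inequality.

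Granting this factorization, dividing through by $E\|J_f(\tilde{x})\|_F^2$ delivers $E L(\tilde{x}, g(\tilde{h})) \geq E L(h, f(g(\tilde{h}))) / E\|J_f(\tilde{x})\|_F^2$, the claimed bound. Thus the conceptual content is entirely inherited from Theorem~1; the only genuinely new ingredient, and the place where the argument is most vulnerable, is the interchange of the expectation with the product that lets the two expectations separate.
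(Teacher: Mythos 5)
Your proposal follows the paper's proof essentially verbatim: Taylor-expand $f$ around $\tilde{x}$ with a Lagrange remainder, apply the Cauchy--Schwarz/Frobenius bound, take the limit $\bar{x}\to\tilde{x}$ so the remainder Jacobian collapses to $J_{f}(\tilde{x})$, and divide by $E\|J_{f}(\tilde{x})\|_{F}^{2}$. The one place where you are more careful than the paper is the factorization of the expectation: the paper passes directly from $E\|(\bar{x}-\tilde{x})^{T}\nabla f[\tilde{x}+\rho(\bar{x}-\tilde{x})]\|^{2}$ to the product of expectations $E\|\bar{x}-\tilde{x}\|^{2}\cdot E\|\nabla f[\tilde{x}+\rho(\bar{x}-\tilde{x})]\|_{F}^{2}$ in a single step labelled ``triangle inequality,'' with no independence or covariance hypothesis, so the gap you flag is present in the published argument rather than resolved by it.
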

\begin{proof}
Let $h^{*}=f(\bar{x})$ be the reconstructed hidden representation. Then we can approximate the encoder function $f(\bar{x})$ by its Taylor expansion around $\tilde{x}$ with Lagrange remainder term
\begin{equation}
f(\bar{x})=f(\tilde{x})+(\bar{x}-\tilde{x})^{T}\nabla f[\tilde{x}+\rho(\bar{x}-\tilde{x})],    \nonumber
\end{equation}
where $\nabla f[\tilde{x}+\rho(\bar{x}-\tilde{x})]$ is the first-order derivative of function $f(\cdot)$ with respect to $\tilde{x}+\rho(\bar{x}-\tilde{x})$ and $\rho\in(0,1)$ is a constant. Here $\nabla f[\tilde{x}+\rho(\bar{x}-\tilde{x})]$ is a real-valued random matrix and $\tilde{x}+\rho(\bar{x}-\tilde{x})$ is a real-valued random vector.

With triangle inequality, we get that
\begin{flalign}
\begin{split}
E L(h,f(g(\tilde{h})))& = E \|h^{*}-h \|^{2}   \\ \nonumber
&= E \|f(\bar{x})-f(\tilde{x})\|^{2} \\  \nonumber
&= E \|(\bar{x}-\tilde{x})^{T} \nabla f[\tilde{x}+\rho(\bar{x}-\tilde{x})] \|^{2}\\ \nonumber
&\leq E\| \bar{x}-\tilde{x}\|^{2} \cdot E\| \nabla f[\tilde{x}+\rho(\bar{x}-\tilde{x})] \|_{F}^{2}, \nonumber
\end{split}
\end{flalign}
where $\|\cdot \|^{2}$ is the squared error and $\|\mathcal{H} \|_{F}^{2}$ is the square of Frobenius norm on random matrix $\mathcal{H}$.

When the intermediate reconstructed input $\bar{x}$ infinitely approaches $\tilde{x}$, we get
\begin{equation}
\lim_{\bar{x}\to \tilde{x}} \| \nabla  f[\tilde{x}+\rho(\bar{x}-\tilde{x})] \|_{F}^{2} = \|J_{f}(\tilde{x})\|_{F}^{2}, \nonumber
\end{equation}
and
\begin{equation}
E  L(h,f(g(\tilde{h}))) \leq E L(\tilde{x},g(\tilde{h})) \cdot E\|J_{f}(\tilde{x})\|_{F}^{2},  \nonumber
\end{equation}
Hence, we have
\[ E L(\tilde{x},g(\tilde{h})) \geq  E  L(h,f(g(\tilde{h}))) /  E\|J_{f}(\tilde{x})\|_{F}^{2}.    \qedhere \]
\end{proof}

\begin{myRemark}
Theorem 2 summarizes that even though the input is corrupted with noises, the reconstruction error of the corrupted input also can not be lower than a lower bound, which is the guiding principle for reconstructing the corrupted input. However, the lower bound of this situation is an expectation.
\end{myRemark}
\begin{myRemark}
This theorem is also the main evidence why minimizing reconstruction error of hidden representation is more robust for feature representation than minimizing the Frobenius norm of Jacobia matrix of hidden representation when confronted with corrupted input.
\end{myRemark}

\section{Robustness of Hidden Representation Reconstruction}
\label{Section4}

In this section, we theoretically prove that minimizing the Frobenius norm of the Jacobian matrix of the hidden representation has a deficiency and may result in a much worse local optimum value. We also show that minimizing reconstruction error of hidden representation for feature representation is more robust than minimizing the Frobenius norm of Jacobia matrix of hidden representation.

\begin{figure*}[htbp]
\begin{center}
\centerline{\includegraphics[ width=14cm]{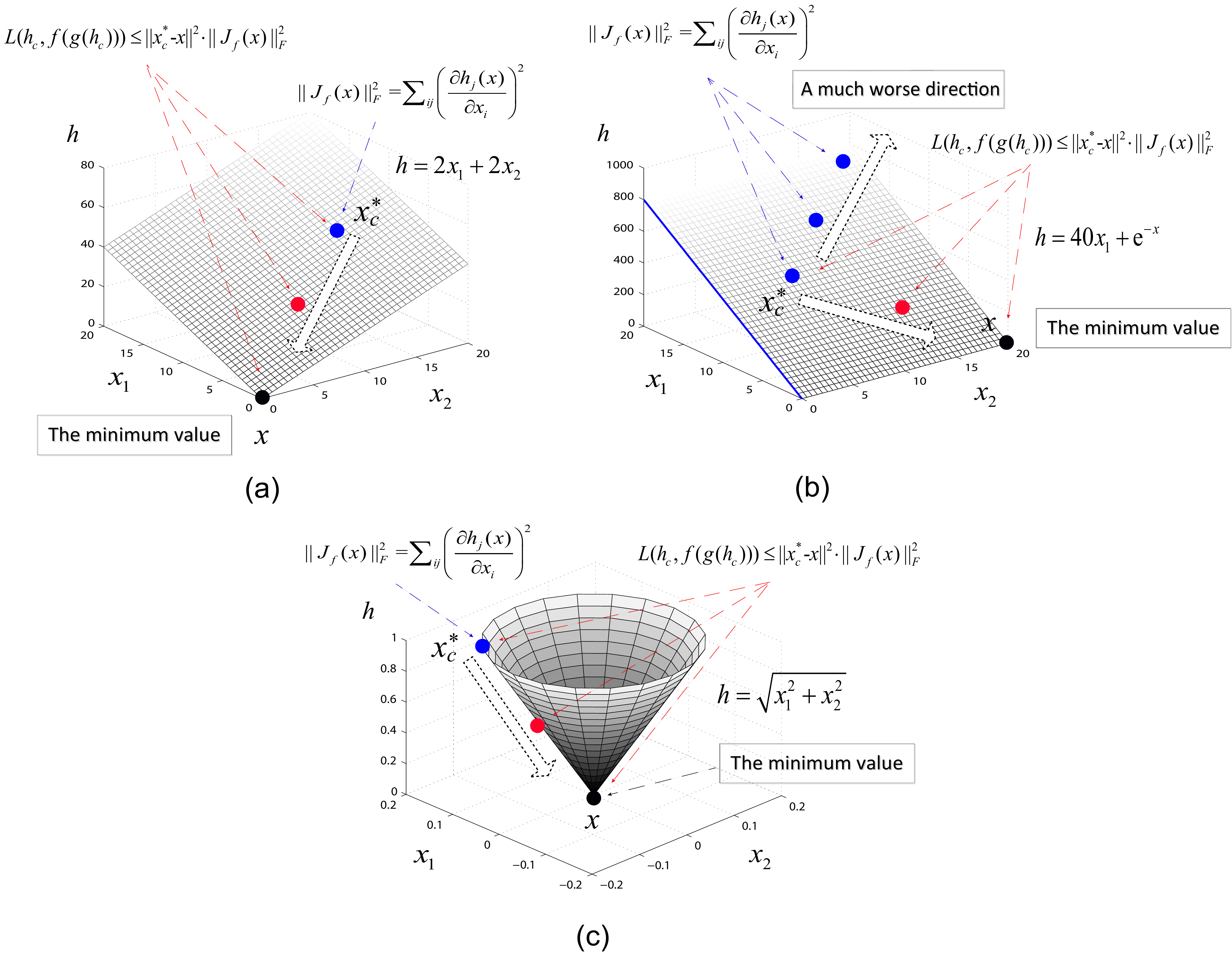}}
\caption{Three examples are provided to show the robustness of reconstruction of hidden representation in a three-dimensional space. (a) A plane that the formula is $h=2x_{1}+2x_{2}$, $x_{1}\geq 0$ and $x_{2}\geq 0$. When the algorithm hits the plane, the Frobenius norm of Jacobia matrix of hidden representation is a constant. Hence, minimizing the Frobenius norm of Jacobia matrix is invalid. It stops to search the optimum value or finds in a random direction. (b) A figure that the formula is $h=40x_{1}+e^{-x_{2}}$. For minimizing the Frobenius norm of Jacobia matrix of hidden representation, searching in a much worse direction is encouraged. (c) A cone that the minimum value is located at its base and any one of the first derivatives is not equal to a constant while the Frobenius norm of Jacobia matrix is a constant. For minimizing the Frobenius norm of Jacobia matrix of hidden representation, once it hits the edge of the cone, it stops finding the minimum value or searches in a random direction. However, for minimizing the reconstruction error of hidden representation, it may be propelled towards its base, the location of the minimum value.}
\label{Examples}
\end{center}
\end{figure*}

\subsection{Theoretical Explanation and Examples}

The main theoretical contribution of this paper is that we show when $x_{c}^{*} \rightarrow x$,
\begin{equation}
\begin{split}
L(h_{c},f(g(h_{c}))) & = \| (x_{c}^{*}-x)^{T}J_{f}(x)\|^{2} \\
& \leq  \| x_{c}^{*}-x\|^{2}  \cdot \|J_{f}(x)\|_{F}^{2},
\end{split}
\label{inequat}
\end{equation}
where $h_{c}$ is the corresponding hidden representation of the clean input $x$, i.e., $h_{c}=f(x)$, $x_{c}^{*}=g(h_{c})=g(f(x))$ is the reconstructed input, $f(\cdot)$ is the encoder function, $g(\cdot)$ is the decoder function, $L(h_{c},f(g(h_{c})))$ is the reconstruction error of hidden representation and $\|J_{f}(x)\|_{F}^{2}= \sum_{ij} \left( \frac{\partial h_{j}(x)}{\partial x_{i}} \right) ^{2} $ is the Frobenius norm of Jacobia matrix of hidden representation $h_{c}$ with respect to input $x$. We give the proof of the Inequation (\ref{inequat}) in Section \ref{Section3}. Now we theoretically show that minimizing the Frobenius norm of Jacobia matrix of hidden representation is invalid in some situations. Meanwhile, we also demonstrate that in these situations, reconstruction of hidden representation is more robust than minimizing the Frobenius norm of Jacobia matrix.

Let us consider three special optimization problems: 1) When the algorithm reaches such areas, all of the first derivatives are equal to constants; 2) Some of the first derivatives are equal to constants; 3) Any one of the first derivatives is not equal to a constant, but the Frobenius norm of Jacobia matrix is a constant.

\textit{Case 1.} We firstly consider the simple situation that all of the first derivatives are equal to constants, i.e., $\frac{\partial h_{j}(x)}{\partial x_{i}}=c_{ij}$ is a constant for all $i=1,2,...,D_{x}$ and $j=1,2,..,D_{h}$. In this situation, the Frobenius norm of Jacobia matrix of hidden representation is a constant. It means that once the algorithm reaches these areas, minimizing the Frobenius norm of Jacobia matrix losts its ability to find the optimum value. The algorithm stops early or searches in a random direction that even includes a much worse direction, far away from the optimum value. Therefore, minimizing this Frobenius norm is invalid for such a situation. However, when $x_{c}^{*} \rightarrow x$, because of Inequation (\ref{inequat}), the value of reconstruction error of hidden representation continues to decrease. Hence, when all of the first derivatives are equal to constants, minimizing reconstruction error of hidden representation works and continues to find the optimum value. It also means that reconstruction of hidden representation is more robust than minimizing the Frobenius norm of Jacobia matrix.

Example 1. When all of the first derivatives are equal to constants. In this situation, the solution space is in fact a plane in a three-dimensional space. Fig. \ref{Examples} (a) presents a plane that the formula is $h=2x_{1}+2x_{2}$, $x_{1}\geq 0$ and $x_{2}\geq 0$. When the algorithm hits the plane, the Frobenius norm of Jacobia matrix of hidden representation is a constant. Hence, minimizing the Frobenius norm of Jacobia matrix is invalid. It stops to search the optimum value or finds in a random direction. However, when $x_{c}^{*} \rightarrow x$, namely, the term of $\| x_{c}^{*}-x\|^{2}$ infinitely approaches 0, minimizing the reconstruction error of hidden representation may find towards its minimum value.

\textit{Case 2.} For the second situation, when some of the first derivatives are equal to constants, minimizing the Frobenius norm of Jacobia matrix of hidden representation seems to be working. However, we theoretically show that minimizing the Frobenius norm of its Jacobia matrix may encourage to obtain a much worse local optimum value. For this case, we only show the results that one of the first derivatives is equal to a constant. The similar results can be obtained with the multiple constants of the first derivatives.

\newtheorem{Theorem 3}{Theorem}
\begin{Theorem} \label{Theorem 3}
Let $(x^{u})^{T}=(x_{1}^{u},x_{2}^{u},...,x_{D_{x}}^{u})$ be the current value and the clean input, $x^{T}=(x_{1},x_{2},...,x_{D_{x}})$, be the optimum value. If one of the first derivatives is equal to constant, then there exists a next value $(x^{w})^{T}=(x_{1}^{w},x_{2}^{w},...,x_{D_{x}}^{w})$, obtained by minimizing the Frobenius norm of Jacobia matrix of hidden representation with respect to $x$, such that
\begin{equation}
\|x^{w}- x\|_{2}^{2}\gg \|x^{u}- x\|_{2}^{2},
\end{equation}
and
\begin{equation}
\|J_{f}(x^{w})\|_{F}^{2} < \|J_{f}(x^{u})\|_{F}^{2}.
\end{equation}
\end{Theorem}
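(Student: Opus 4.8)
The plan is to read the phrase ``obtained by minimizing the Frobenius norm'' as a steepest-descent step on the penalty $\|J_{f}(x)\|_{F}^{2}$ and then to settle the theorem, which is an existence statement, by an explicit construction. First I would define the next iterate by
\[
x^{w} = x^{u} - \eta\,\nabla_{x}\|J_{f}(x^{u})\|_{F}^{2},
\]
for a step size $\eta>0$. By hypothesis only one of the partial derivatives is constant while the others still vary, so $\nabla_{x}\|J_{f}(x^{u})\|_{F}^{2}\neq 0$, and the standard descent estimate $\|J_{f}(x^{w})\|_{F}^{2}=\|J_{f}(x^{u})\|_{F}^{2}-\eta\|\nabla_{x}\|J_{f}(x^{u})\|_{F}^{2}\|^{2}+O(\eta^{2})$ gives $\|J_{f}(x^{w})\|_{F}^{2}<\|J_{f}(x^{u})\|_{F}^{2}$ for all sufficiently small $\eta$. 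This disposes of the second displayed inequality at once.

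The main work is the first inequality, and the key observation is that the descent direction $-\nabla_{x}\|J_{f}\|_{F}^{2}$ is dictated entirely by the shape of the penalty surface and carries no information about where the true optimum $x$ sits. To make this concrete I would invoke the scalar-output encoder of Fig.~\ref{Examples}(b), namely $h=40x_{1}+e^{-x_{2}}$, for which $\partial h/\partial x_{1}=40$ is the constant derivative and $\partial h/\partial x_{2}=-e^{-x_{2}}$ is not. Here $\|J_{f}(x)\|_{F}^{2}=1600+e^{-2x_{2}}$, so that $\nabla_{x}\|J_{f}\|_{F}^{2}=(0,\,-2e^{-2x_{2}})$; the update therefore leaves $x_{1}$ untouched while pushing $x_{2}$ strictly upward, monotonically driving the penalty toward its infimum $1600$. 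If the optimum $x$ has the smaller second coordinate, then compounding this step sends $x_{2}^{w}\to+\infty$, whence $\|x^{w}-x\|_{2}^{2}\to\infty$ while $\|J_{f}(x^{w})\|_{F}^{2}$ keeps strictly decreasing. This realizes $\|x^{w}-x\|_{2}^{2}\gg\|x^{u}-x\|_{2}^{2}$ and exhibits precisely the failure mode the theorem asserts.

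The hard part is not the computation but pinning down the informal content of the statement. The symbol $\gg$ must be read as ``can be made arbitrarily larger than,'' which I would formalize either by permitting a large $\eta$ or by composing finitely many descent steps until $x_{2}^{w}$ exceeds any prescribed threshold; and ``next value obtained by minimizing'' must be read as such a descent update rather than a global minimizer. The subtle point to verify is that the gradient component along the constant-derivative coordinate vanishes identically, so the optimizer receives no corrective signal in the $x_{1}$ direction and never penalizes the growing displacement away from $x$. This vanishing component is exactly the deficiency being exposed, and the only care needed is to confirm that the chosen example does not accidentally couple the two coordinates in a way that would steer the step back toward $x$.
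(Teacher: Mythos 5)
Your proposal is correct in substance but reaches the conclusion by a genuinely different mechanism than the paper. The paper's proof is an adversarial existence argument: since the entry $\partial h_{p}/\partial x_{q}$ equals the constant $c_{pq}$, its contribution $c_{pq}^{2}$ to $\|J_{f}\|_{F}^{2}$ is insensitive to $x_{q}$, so (under the paper's additional, unstated assumption that the remaining Jacobian entries are likewise unaffected by $x_{q}$) any point on the line $c_{pq}x_{q}+c_{0}$ is an equally legitimate ``next value''; the paper then picks $x_{q}^{w}$ with $|x_{q}^{w}-x_{q}|$ arbitrarily large while decreasing the objective through one other coordinate ($x_{1}$), which delivers both inequalities simultaneously. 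You instead instantiate a concrete deterministic dynamics --- steepest descent on $\|J_{f}\|_{F}^{2}$ for $h=40x_{1}+e^{-x_{2}}$ --- and show the iterates provably diverge because the penalty $1600+e^{-2x_{2}}$ attains its infimum only as $x_{2}\to+\infty$. Your route buys determinism (no appeal to an optimizer ``choosing'' a bad point among equally good ones) and a clean separation of the two inequalities; the paper's route buys generality of mechanism, since the freedom in the constant-derivative coordinate is present whenever the hypothesis holds, whereas a penalty whose infimum sits at infinity is a feature of your particular example rather than of the hypothesis.

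Two caveats. First, the divergence in your construction occurs along $x_{2}$, the coordinate whose derivative is \emph{not} constant, and is driven entirely by the shape of $e^{-2x_{2}}$; yet your closing ``key observation'' attributes the failure to the vanishing gradient component along $x_{1}$. That component only explains why the static $x_{1}$-displacement is never corrected --- it is not what makes $\|x^{w}-x\|_{2}^{2}$ grow --- so your stated diagnosis and the actual engine of your proof do not match; it is the paper's argument, not yours, that genuinely exploits the constant derivative. Second, the theorem is asserted for an arbitrary encoder with one constant first derivative, and your first inequality is established only for one example; for a general such encoder, descent on $\|J_{f}\|_{F}^{2}$ could just as well converge to a point near $x$, so the example does not by itself prove the quantified statement. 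Given how informally the theorem and the paper's own proof are phrased, neither caveat is fatal, but the first should be repaired before the argument is written up.
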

\begin{proof}
We assume without losing generality that $\frac{\partial h_{p}(x)}{\partial x_{q}}=c_{pq}$ is a constant and all other first derivatives $\frac{\partial h_{j}(x)}{\partial x_{i}}$ are varying, where $i=1,2,...,q-1,q+1,...,D_{x}$ and $j=1,2,...,p-1,p+1,...,D_{h}$. For convenience, we also assume that there are only two different places between the current value $x^{u}$ and the next value $x^{w}$: one is the first position and the other is the $q$-th position, i.e., $x_{i}^{u}=x_{i}^{w}$, $i=2,...,q-1,q+1,...,D_{x}$.

Because $\frac{\partial h_{p}(x)}{\partial x_{q}}=c_{pq}$ is a constant, the integral $\int  \frac{\partial h_{p}(x)}{\partial x_{q}} d x_{q}  = c_{pq} x_{q} + c_{0}$ is a line in a multi-dimensional space, where $c_{0}$ is the bias. If we keep the same directions for all other first derivatives, $\frac{\partial h_{j}(x)}{\partial x_{i}}$, $i=1,2,...,q-1,q+1,...,D_{x}$, $j=1,2,...,p-1,p+1,...,D_{h}$, then for the direction $\frac{\partial h_{p}(x)}{\partial x_{q}}$, taking any value on this line $c_{pq} x_{q} + c_{0}$ has no effect on the objective function of minimizing the Frobenius norm of Jacobia matrix of hidden representation. Hence, along this line, we can take the value of the $q$-th dimension of the next value such that the next value is far away from the optimum value $x$ and its projection on the $x_{q}$-axis is very large, i.e., $|x_{q}^{w}-x_{q}|$  is a very large value.

In addition, if we keep the same directions for all other first derivatives except the direction $\frac{\partial h_{1}(x)}{\partial x_{1}}$, then along this direction, we can decrease the objective function of minimizing the Frobenius norm of Jacobia matrix of hidden representation and we can also get that $| \frac{\partial h_{1}(x^{w})}{\partial x_{1}} | < | \frac{\partial h_{1}(x^{u})}{\partial x_{1}}|$ and $|x_{1}^{w}-x_{1}|$ are bounded. Note that $|x_{1}^{u}-x_{1}|$ and $|x_{q}^{u}-x_{q}|$ are also bounded and only the first dimension and the $q$-th dimension are different. Therefore, we can get
\begin{equation}
\begin{split}
\|x^{w}- x\|_{2}^{2} &=(x_{1}^{w}-x_{1})^2+(x_{q}^{w}-x_{q})^2+ \mathop{ \sum_{i=2}} \limits_{i\neq q}^{D_{x}} (x_{i}^{w}-x_{i})^2  \\ \nonumber
&\gg (x_{1}^{u}-x_{1})^2+(x_{q}^{u}-x_{q})^2+ \mathop{ \sum_{i=2}} \limits_{i\neq q}^{D_{x}} (x_{i}^{u}-x_{i})^2 \\
&= \|x^{u}- x\|_{2}^{2}.
\end{split}
\end{equation}
and
\begin{equation}
\begin{split}
\|J_{f}(x^{w})\|_{F}^{2} &= \left( \frac{\partial h_{1}(x^{w})}{\partial x_{1}} \right) ^{2}  + \sum_{i=2}^{D_{x}} \sum_{j=2}^{D_{h}} \left( \frac{\partial h_{j}(x^{w})}{\partial x_{i}} \right) ^{2}  \\
&< \left( \frac{\partial h_{1}(x^{u})}{\partial x_{1}} \right) ^{2}  + \sum_{i=2}^{D_{x}} \sum_{j=2}^{D_{h}} \left( \frac{\partial h_{j}(x^{u})}{\partial x_{i}} \right) ^{2} \\ \nonumber
&=\|J_{f}(x^{u})\|_{F}^{2}.               \hfill \qedhere
\end{split}
\end{equation}
\end{proof}

\begin{myRemark}
Theorem 3 demonstrates that if one of the first derivatives is equal to a constant, minimizing the Frobenius norm of Jacobia matrix encourages to obtain a much worse local optimum value. However, because of Inequation (\ref{inequat}), minimizing the reconstruction error of hidden representation does work and may find the optimum value.
\end{myRemark}

Example 2. When some of the first derivatives are equal to constants. Fig. \ref{Examples} (b) demonstrates the figure that the formula is $h=40x_{1}+e^{-x_{2}}$. The minimum value is located at the bottom and one of the first derivatives is equal to a constant. Note that $\frac{\partial h(x)}{\partial x_{1}}$ is a constant, it has no contribution to minimizing the Frobenius norm. Hence, keeping the same direction for the other first derivative $\frac{\partial h(x)}{\partial x_{2}}$ and moving along this direction $\frac{\partial h(x)}{\partial x_{1}}$ is permissible, even if it is far away from the minimum value. The only limitation is that it should move in the direction of decreasing the value of $\frac{\partial h(x)}{\partial x_{2}}$. As a result, searching in a much worse direction is encouraged for minimizing the Frobenius norm of Jacobia matrix of hidden representation. Fig. \ref{Examples} (b) illustrates a much worse direction: the value of $\frac{\partial h(x)}{\partial x_{2}}$ decreases and the value of $\frac{\partial h(x)}{\partial x_{1}}$ is a constant, while the next value is far away from the minimum value, located at the bottom line. However, for minimizing the reconstruction error of hidden representation, as the term of $\| x_{c}^{*}-x\|^{2}$ infinitely approaches 0, it guarantees that the search direction is not away from the minimum value and it may move towards its bottom.

\textit{Case 3.} For the third situation, when all of the first derivatives are varying but the Frobenius norm of Jacobia matrix is a constant, it is similar to the first situation. More specifically, let $h_{j}(x)=f(x_{1},x_{2},...,D_{x})=\sqrt{x_{1}^{2}+x_{2}^{2}+...+x_{D_{x}}^{2}}$ be the encoder function for all $j=1,2,..,D_{h}$. Then we have that the Frobenius norm of Jacobia matrix is a constant, i.e., $\|J_{f}(x)\|_{F}^{2}= \sum_{ij} \left( \frac{\partial h_{j}(x)}{\partial x_{i}} \right) ^{2}= D_{x} \cdot D_{h}$. For such case, we also can prove that minimizing this Frobenius norm is invalid and reconstruction of hidden representation is more robust than minimizing the Frobenius norm of Jacobia matrix.

Example 3. When any one of the first derivatives is not equal to a constant, but the Frobenius norm of Jacobia matrix is a constant. In such an optimization problem, the solution space of this problem is in fact a cone in a three-dimensional space. Fig. \ref{Examples} (c) presents a cone, i.e., $h=\sqrt{x_{1}^{2}+x_{2}^{2}}$, where the minimum value is located at its base and any one of the first derivatives is not equal to a constant while the Frobenius norm of Jacobia matrix is a constant. For minimizing the Frobenius norm of Jacobia matrix of hidden representation, once it reaches the edge of the cone, it stops finding the minimum value or searches in a random direction. Nevertheless, for minimizing the reconstruction error of hidden representation, since the term of $\| x_{c}^{*}-x\|^{2}$ infinitely approaches 0 when $x_{c}^{*} \rightarrow x$, it is working and may be propelled towards its base, the location of the minimum value.

From the discussion above, we can conclude that when some or all of the first derivatives are equal to constants or all of the first derivatives are varying while the Frobenius norm of Jacobia matrix is a constant, minimizing the reconstruction error of hidden representation for feature representation is more robust than minimizing the Frobenius norm of Jacobia matrix. This may be the main reason why DDAEs always outperform CAEs in our experiments.

\section{Double Denoising Auto-Encoders}
\label{Section5}

We have shown that the necessary condition for the reconstruction of the input to reach the ideal state is that the reconstruction of hidden representation achieves its ideal condition in Section \ref{Section3}. We also show that minimizing (maximizing) the Frobenius norm of Jacobia matrix may get a much worse local optimum value and minimizing reconstruction error of hidden representation for feature representation is more robust than minimizing the Frobenius norm of Jacobia matrixas as illustrated in Section \ref{Section4}. Therefore, in this paper, we consider how to decrease the reconstruction error of hidden representation, which may get a better feature representation. We add the idea of reconstruction of hidden representation to the DAEs and propose a new deep learning model that takes the advantages of corruption and reconstruction. We anticipate that our proposed model has the capability to learn invariant and robust feature representation.

\begin{figure*}[htbp]
\begin{center}
\centerline{\includegraphics[width=14cm]{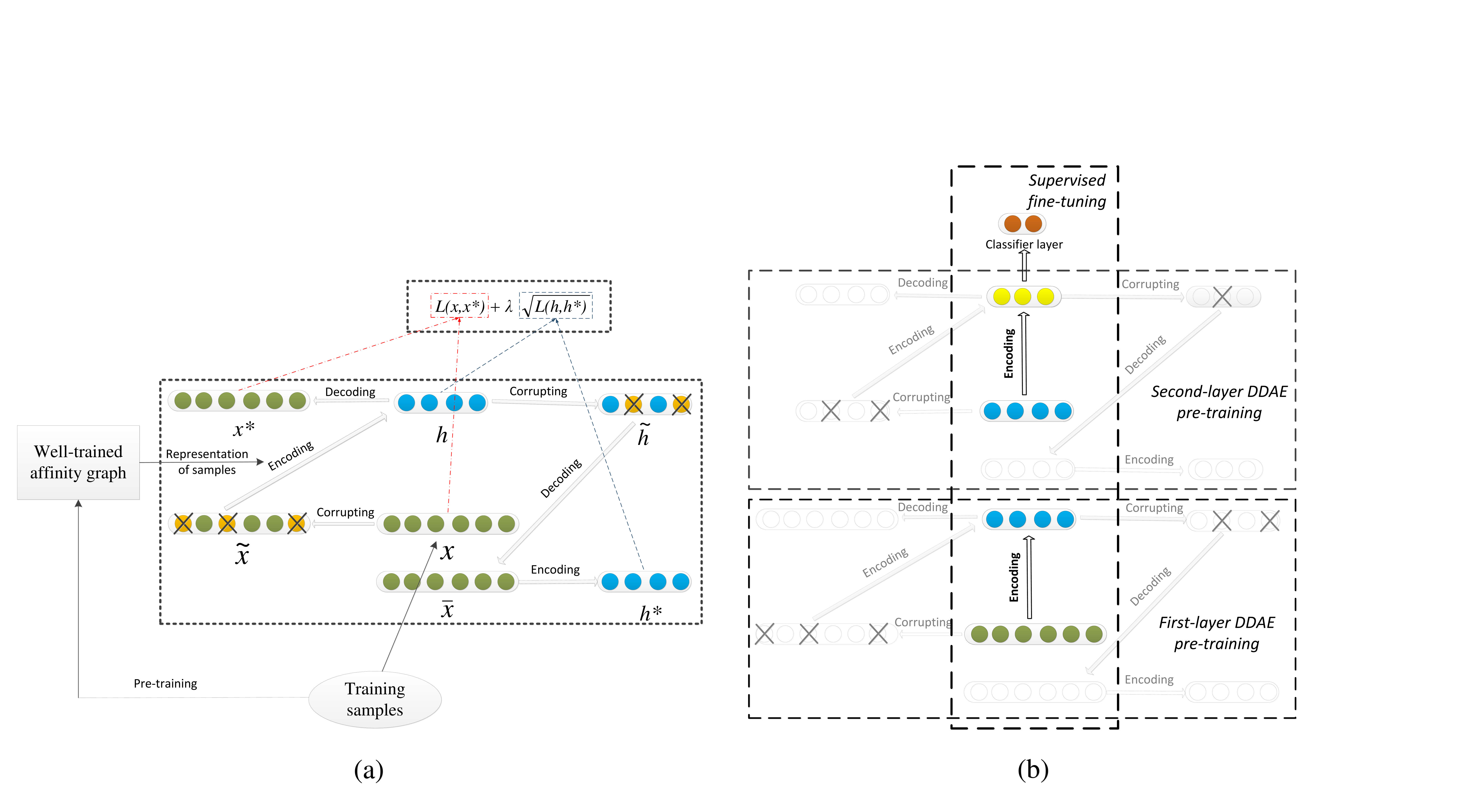}}
\caption{(a) The DDAE architecture. A sample $x$ is stochastically corrupted to $\tilde{x}$. The auto-encoder then maps it to hidden representation $h$ (via Encoding) and attempts to reconstruct $x$ via Decoding, producing reconstruction $x^{*}$. Reconstruction error is measured by the loss $L(x, x^{*})$. Meanwhile, the hidden representation $h$ is also stochastically corrupted to $\tilde{h}$, and then, $\tilde{h}$ is mapped to an intermediate reconstructed input $\bar{x}$ (via Decoding) and attempts to reconstruct $h$ via Encoding, producing reconstruction $h^{*}$. Reconstruction error is measured by the loss $\sqrt{L(h, h^{*})}$. (b) An example of two-layer DDAEs. Hidden representation of the first-layer DDAE is taken as the input of the second-layer DDAE. A classifier layer is added on the top of the stacked two-layer DDAEs to form a multilayer perceptron (MLP) classifier. For training a MLP classifier, the stacked two-layer DDAEs are firstly pre-trained in a greedy, layer-wise manner. After that, the MLP classifier, which is initialized by the pre-trained parameters, is fine-tuned by utilizing back-propagation.}
\label{DDAE}
\end{center}
\end{figure*}

\subsection{DDAEs Architecture}

As previously stated, a DDAE usually has two separate parts: constraints on the input (Constraints Part) and reconstruction on the hidden representation (Reconstruction Part). We use the DAE as the Constraints Part in a DDAE. In fact, one can replace Constraints
Part by any other auto-encoder variant. For example, we can replace the DAE with a Sparse Auto-Encoder or a CAE. It means that DDAE is flexible. The Reconstruction Part is done by first corrupting the hidden representation $h\in\Re^{D_{h}}$ into $\tilde{h}\in\Re^{D_{h}}$ according to a conditional distribution $q(\tilde{h}|h)$, and then mapping the corrupted hidden representation $\tilde{h}$ into an intermediate reconstructed input $\bar{x}=g(\tilde{h})=S_{g}(\textbf{W}'\tilde{h}+b_{x})\in\Re^{D_{x}}$ from which we reconstruct the hidden representation $h^{*}=f(\bar{x})=S_{f}(\textbf{W}\bar{x}+b_{h})$. Fig. \ref{DDAE} (a) illustrates a schematic representation of the procedure. Note that we use the reconstruction error of hidden representation $L(h,h^{*})=L(h,f(g(\tilde{h})))$ instead of the error between intermediate reconstructed input and original input, $L(x,\bar{x})$, or more complicated expressions, such as the combination of $L(x,\bar{x})$ and $L(h,h^{*})$. It is because the intermediate reconstructed input $\bar{x}=g(\tilde{h})=S_{g}(\textbf{W}'\tilde{h}+b_{x})$ is almost equal to the reconstruction input $x^{*}=g(h)=S_{g}(\textbf{W}'h+b_{x})$. As a result, $L(x,\bar{x})$ has the similar effect on the Constraints Part $L(x,x^{*})=L(x,g(f(\tilde{x})))$.

Fig. \ref{DDAE} (b) demonstrates an example of two-layer DDAEs. Usually, a DDAE is used to stack multiple layers to form a deep DDAEs architecture: output of a DDAE is used as input of the next DDAE. A classifier layer is built on the top of the stacked deep DDAEs architecture to form a multi-layer classifier. For training a multi-layer classifier, the stacked deep DDAEs architecture is firstly pre-trained in a greedy, layer-wise manner. Subsequently, the multi-layer classifier is initialized by the pre-trained parameters and fine-tuned by utilizing back-propagation.

\subsection{Training DDAEs}

To train a DDAE, there are two ways: one is to optimize a combination of Constraints Part and Reconstruction Part (DDAE-COM); the other is to optimize them separately (DDAE-SEP). For convenience, we use a linear combination of Constraints Part and Reconstruction Part as the objective function of DDAE-COM. Parameters $\theta=\{\textbf{W}, b_{x}, b_{h}\}$ are trained to minimize the reconstruction error over a training set $X=\{x_{1}, x_{2}, \cdots, x_{N}\}$. The objective function optimized by stochastic gradient descent becomes:
\begin{equation}
\mathcal{J}_{COM}(\theta)=  \sum_{x\in X}E\left[ L(x,g(f(\tilde{x})))+\lambda\sqrt{L(h,f(g(\tilde{h})))}\right],
\label{Equt:15}
\end{equation}
where $L(x,g(f(\tilde{x})))$ is the reconstruction error of the DAE (Constraints Part), $\sqrt{L(h,f(g(\tilde{h})))}$ is the reconstruction error of hidden representation (Reconstruction Part), $E(\delta)$ is the mathematical expectation of $\delta$, $\tilde{x}\in \Re^{D_{x}}$ is obtained from a conditional distribution $q(\tilde{x}|x)$, $h=f(\tilde{x})$ and $\lambda$ is a hyper parameter that controls the tradeoff between Constraints Part and Reconstruction Part.

From Equation (\ref{Equt:15}), we can conclude that a DDAE can be regarded as a general expression that extends the DAE. If the hyper parameter $\lambda$ in (\ref{Equt:15}) is set to be 0, a DDAE is the same as that of the DAE. That also means the DAE is a special case of our proposed method, i.e., DDAE is a generalization of the basic DAE algorithm. It should be pointed out that we utilize $\sqrt{L(h,f(g(\tilde{h})))}$ instead of $L(h,f(g(\tilde{h})))$ to calculate the reconstruction error of hidden representation. It is because the value of $L(h,f(g(\tilde{h})))$ is large at the beginning of training and we need to normalize it.

For the way of optimizing separately, DDAE-SEP firstly minimizes the following objective function over a mini-batch $X'=\{x_{1},x_{2},\cdots,x_{m}\}$:
\begin{equation}
O_{1}: \ \ \mathcal{J}_{SEP-1}(\theta)=\sum_{x\in X'} E \left[ L(x,g(f(\tilde{x}))) \right],
\end{equation}
where $x$ is a training sample selected from a mini-batch $X'$ and $L(x,g(f(\tilde{x})))$ is the reconstruction error of the DAE (Constraints Part) on the selected sample. Subsequently, DDAE-SEP updates the parameters $\theta$ optimized by the first objective function $O_{1}$ and minimizes the second objective function:
\begin{equation}
O_{2}: \ \ \mathcal{J}_{SEP-2}(\theta)=\sum_{h\in H'}  E \left[ L(h,f(g(\tilde{h}))) \right].
\label{Equt:17}
\end{equation}
where $h$ is the corresponding hidden representation of $x$ with updated parameters $\theta$, $H'$ is the corresponding hidden representation of $X'$ and $L(h,f(g(\tilde{h})))$ is the reconstruction error of hidden representation (Reconstruction Part). Once the parameters $\theta$ are updated by objective functions $O_{1}$ and $O_{2}$, DDAE-SEP will train on the next mini-batch and repeat the same procedure until stopping criteria are met. For more details about how to implement DDAE-COM and DDAE-SEP, please refer to Algorithms 1 and 2.

Let $h=S_{f}(\textbf{W}\tilde{x}+b_{h})$ be the hidden representation. With linear+sigmoid mapping, the computational complexity of reconstruction error of the input (e.g. squared error $L(x,g(f(\tilde{x})))=\|-x+b_{x}+\sum_{j=1}^{D_{h}} h_{j} W_{j} \|^{2}$)
is $O\left(D_{x}^{2} \times D_{h}^{2}\right)$. Based on (\ref{Equt:15}),  we can see that the computation complexity of $\sqrt{L(h,f(g(\tilde{h})))}= \left | -h+b_{h}+\sum_{i=1}^{D_{x}} x_{i} W_{i} \right |$ is $O\left(D_{x} \times D_{h}\right)$.
From (\ref{Equt:17}), the computation complexity of objective function $O_{2}$ is $O\left(D_{x}^{2} \times D_{h}^{2}\right)$.
Therefore, both Algorithms (DDAE-COM and DDAE-SEP) have the same overall computational complexity of $O\left(D_{x}^{2} \times D_{h}^{2}\right)$.

\begin{algorithm}[t]
\SetAlgoNoLine
\KwIn{Training data $X$, learning rate $\eta$, mini-batch size $m$, tradeoff coefficient $\lambda$.}
Initialize the parameters $\theta=\{\textbf{W}, b_{x}, b_{h}\}$\;
\Repeat{Stopping criteria are met}{
   Select $m$ samples $X'$ from $X$\;
   Let $\mathcal{J}(\theta)= \sum\limits_{x\in X'} E\left[ L(x,g(f(\tilde{x})))+\lambda\sqrt{L(h,f(g(\tilde{h})))}\right]$\;
   Update parameters $\theta$ by $\theta \longleftarrow \theta - \eta \frac{\partial \mathcal{J}(\theta)}{\partial \theta}$.
}
\caption{The DDAE-COM Algorithm}
\label{alg:exam}
\end{algorithm}

\subsection{Properties of DDAEs}

Please note that we use corrupted hidden representation $\tilde{h}$ instead of a clean hidden representation $h$ to reconstruct a clean hidden representation in Equations (\ref{Equt:15}) and (\ref{Equt:17}). There are two main reasons: 1) Although DDAEs use the manifold learning to extract robust features, we can not guarantee all the noises have been eliminated. They may propagate to hidden representation. 2) Even if all the noises have been eliminated, DDAEs may learn some inessential features such as backgrounds. The two are also the reasons why corrupting and reconstructing hidden representation for dealing with noises or some inessential features such as backgrounds is more robust than DAEs. In DAEs, corrupting the input and then reconstructing it makes DAEs can learn robust features. Our proposed model not only corrupts and reconstructs the input, but also does the same thing on hidden representation. For feature representation, corrupting hidden representation and then reconstructing it can partially reduce the negative effects such as the noises propagated by the input or some inessential features such as backgrounds, while DAEs do not deal with such noises or inessential features. Therefore, compared with DAEs, our proposed model is more robust to deal with the noises and some inessential features.

\begin{algorithm}[t]
\SetAlgoNoLine
\KwIn{Training data $X$, learning rate $\eta_{1}$ and $\eta_{2}$, mini-batch size $m$.}
Initialize the parameters $\theta=\{\textbf{W}, b_{x}, b_{h}\}$\;
\Repeat{Stopping criteria are met}{
        Select $m$ samples $X'$ from $X$\;
        Let $\mathcal{J}_{SEP-1}(\theta)= \sum\limits_{x\in X'} E\left[ L(x,g(f(\tilde{x})))\right]$\;
        Update parameters $\theta$ by $\theta \longleftarrow \theta - \eta_{1} \frac{\partial \mathcal{J}_{SEP-1}(\theta)}{\partial \theta}$\;
        Recalculate the hidden representation $H'$ of the selected $m$ samples with updated $\theta$\;
        Let $\mathcal{J}_{SEP-2}(\theta)= \sum\limits_{h\in H'} E\left[ L(h,f(g(\tilde{h}))\right]$\;
        Update parameters $\theta$ by $\theta \longleftarrow \theta - \eta_{2} \frac{\partial \mathcal{J}_{SEP-2}(\theta)}{\partial \theta}$.
      }
\caption{The DDAE-SEP Algorithm}
\label{alg:example222}
\end{algorithm}

\section{Experiments}
\label{Section6}

We evaluate DDAEs on twelve UCI datasets, thirteen image recognition datasets and two human genome sequence datasets and compare the performance with competitive state-of-the-art models. Several important parameters will also be experimentally evaluated. All the experiments are tested
on a laptop with Intel-i7 2.4G CPU, 16G DDR3 RAM, Windows 10 and Python 2.7.

\subsection{A Description of Datasets}

The twelve UCI datasets are selected from the UCI machine learning repository to evaluate the performance of DDAEs with other algorithms. For all the UCI datasets, we utilize the 10-fold cross validation to evaluate the competing algorithms and give the average error rates with 10 runs. Note that most of the UCI datasets are tested in the recent work, Deep Support Vector Machine (DeepSVM) \cite{DeepSVM2016}. Table \ref{UCI:data} summarizes the basic information of twelve UCI datasets.

\begin{table}[htbp]
\caption{UCI datasets used in the experiments}
\label{UCI:data}
\begin{center}
\begin{tabular}{llll}
\hline
Dataset & Samples & Features & Classes \\
\hline
 $\textit{sonar}$      & 208  & 60 & 2 \\
 $\textit{ionosphere}$ & 351 & 34 & 2 \\
 $\textit{ILPD}$ & 583 & 10 & 2 \\
 $\textit{breast\_cancer}$ & 683 & 10 & 2 \\
 $\textit{australian}$ & 690 & 14 & 2 \\
 $\textit{diabetes}$ & 768 & 8 & 2 \\
 $\textit{vehicle}$ & 846 & 18 & 4 \\
 $\textit{vowel}$ & 990 & 10 & 11 \\
 $\textit{german\_numer}$ & 1000 & 24 & 2 \\
 $\textit{cardiotocography}$ & 2126& 21 & 10 \\
 $\textit{segment}$ & 2310 & 19 & 7 \\
 $\textit{splice}$ &3175 & 60 & 2 \\

\hline
\end{tabular}
\end{center}
\end{table}

The thirteen image recognition datasets consist of the well-known MNIST digits classification problem, eight benchmark datasets and four more complex image recognition datasets. The MNIST digits come from the 28$\times$28 gray-scale images of handwritten digits. The eight benchmark datasets consist of five ten-class problems modified from MNIST digits and three two-class problems with shape classification. The five ten-class problems are variants of MNIST digits: smaller subset of MNIST (\textit{basic}), digits with random angle rotation (\textit{rot}), digits with random noise background (\textit{bg-rand}), digits with random image background (\textit{bg-img}) and digits with rotation and image background (\textit{bg-img-rot}). The three two-class problems are shape classification tasks: white tall and wide rectangles on black background (\textit{rect}), tall and wide rectangular image overlayed on different background images (\textit{rect-img}), convex and concave shape (\textit{convex}). All these data sets are also used in the works of Larochelle et al. \cite{larochelle2007empirical}, Rifai et al. \cite{rifai2011contractive} and Vincent et al. \cite{vincent2010stacked} and divided into three parts: a training set for pre-training and fine-tuning, a validation set for the choice of hyper-parameters and a testing set for the report result. The four more complex image recognition datasets are NORB \cite{RFN2015}, CIFAR-10 \cite{RFN2015}, COIL-100 \cite{AAAI2017} and Caltech-101 \cite{Du2016Stacked}. Details on image recognition datasets are listed in Table \ref{Digit:data}.

\begin{table}[htbp]
\caption{Digit image recognition used in the experiments}
\label{Digit:data}
\vskip 0.15in
\begin{center}
\begin{small}
\begin{tabular}{lllll}
\hline
Dataset & Train & Valid. & Test & Classes\\
\hline

\textit{rect}     &1000   &200   &50000   & 2\\
\textit{rect-img}   &10000  &2000  &50000   &2\\
\textit{convex}     &7000   &1000  &50000   &2\\
\hline
MNIST                & 50000 & 10000  & 10000  &10  \\
\textit{basic}      & 10000 & 2000 & 50000  &10   \\
\textit{rot}        & 10000 & 2000 & 50000  &10   \\
\textit{bg-rand}      & 10000 & 2000 & 50000  &10   \\
\textit{bg-img}       & 10000 & 2000 & 50000  &10   \\
\textit{bg-img-rot}   & 10000 & 2000 & 50000  &10   \\

\textit{NORB}      & 19300 & 5000 & 24300  & 5   \\
\textit{CIFAR-10}   & 45000 & 5000 & 10000  &10   \\
\textit{COIL-100}   & 1000 & 200 & 6000  &100   \\
\textit{Caltech-101}  & 3030 & 300 & 5814  &101   \\

\hline
\end{tabular}
\end{small}
\end{center}
\vskip -0.1in
\end{table}

The two human genome sequence datasets are the standard benchmark from fruitfly.org for predicting gene splicing sites on human genome sequences. The first dataset is the Acceptor locations containing 6,877 sequences with 90 features. The second data set is the Donor locations including 6,246 sequences with 15 features. The Acceptor dataset has 70bp in the intron (ending with AG) and 20bp of the following exon. The Donor dataset has 7bp of the exon and 8bp of the following intron (starting with GT). All these sequences consist of four letters (A, T, C and G). To use these two datasets, we need firstly to transform the four letters with four real numbers and then use these datasets to classify each sequence.

\subsection{Experimental Verification}

In Section \ref{Section3}, we have proved that if the reconstruction of a hidden representation does not reach its ideal situation, the reconstruction of the input can not obtain the ideal value. In Section \ref{Section4}, we have shown that minimizing the reconstruction error of hidden representation for feature representation is more robust than minimizing the Frobenius norm of Jacobia matrix of hidden representation. However, one question still needs to be solved: how to experimentally validate these points?

\subsubsection{Reconstruction of Hidden Representation vs. Reconstruction of the Input}

To show the robustness of the reconstruction of hidden representation against the reconstruction of the input, we evaluate the classification performance by only minimizing the Reconstruction Error of Hidden Representation (REHR) and compare the results with only minimizing the reconstruction error of the input, i.e., stacked AEs (SAE) \cite{rifai2011contractive}. Fig. \ref{FIG:only} shows the classification error rates of REHR on thirteen image recognition datasets with different layers. For the sake of fairness, the results of SAE are derived from the work of Vincent et al. \cite{vincent2010stacked}. As shown in Fig. \ref{FIG:only}, REHR with 3 layers almost gets the better results on all the thirteen image recognition datasets.

Fig. \ref{Fig:ALL} presents some example images with corresponding filters learned by the models of stacked AEs (SAEs) \cite{rifai2011contractive}, stacked DAEs \cite{vincent2010stacked}, stacked CAEs \cite{rifai2011contractive}, stacked REHR and stacked DDAEs. This figure shows features learned by the first layer of all the models on the \textit{rect} and \textit{bg-img-rot} datasets.

\begin{figure}[htbp]
\begin{center}
\centerline{\includegraphics[width=14cm]{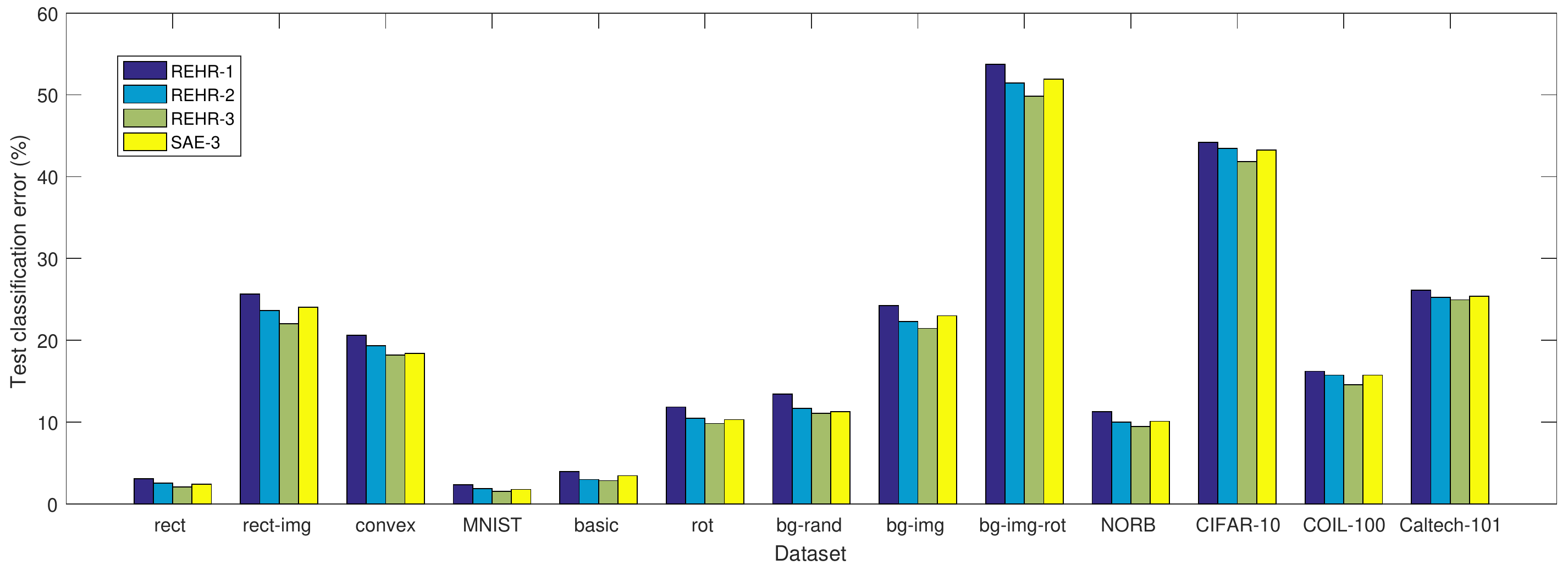}}
\caption{Classification error rates on thirteen benchmark classification tasks. The results are based on only minimizing the reconstruction error of hidden representation (REHR) with different layers. The results of SAE-3 are based on only minimizing the reconstruction error of the input and most results come from Vincent et al. \cite{vincent2010stacked}.}
\label{FIG:only}
\end{center}
\end{figure}

\begin{figure*}[htbp]
\begin{center}
\centerline{\includegraphics[width=14cm]{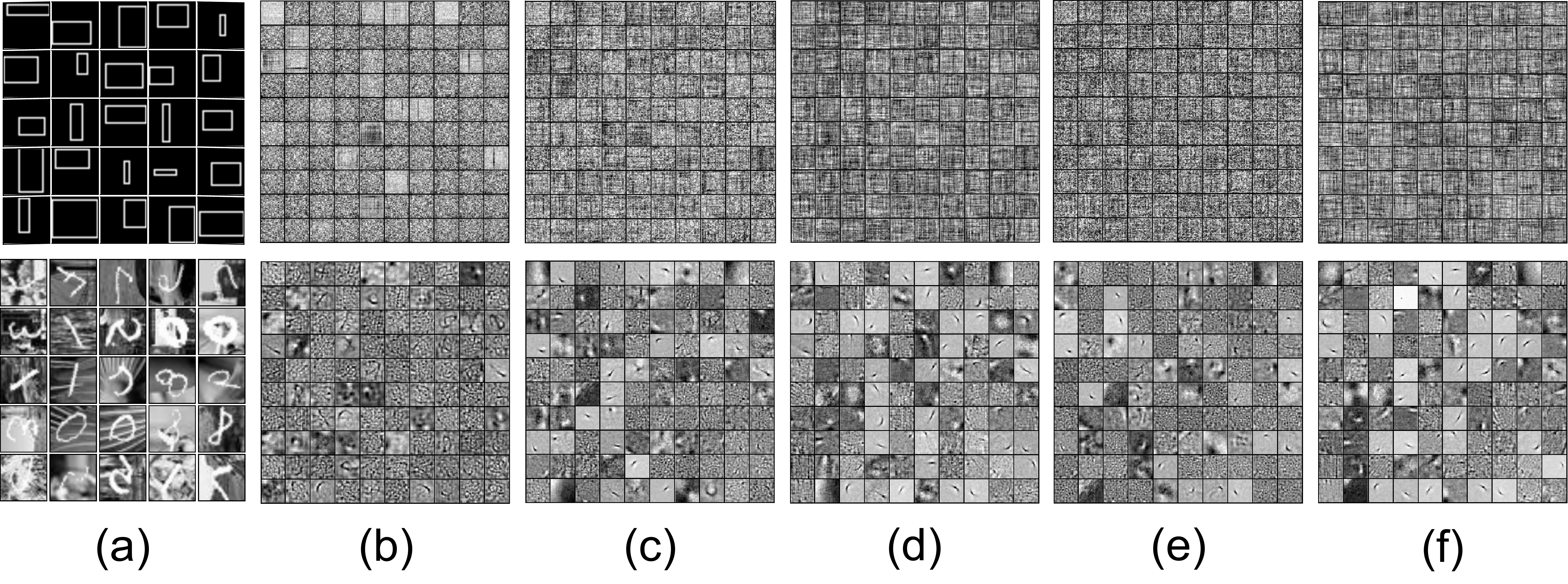}}
\caption{Example images with corresponding filters learned by different models on \textit{rect} (top) and \textit{bg-img-rot} (bottom) datasets. (a) Example images; (b) Filters learned by SAE; (c) Filters learned by CAE; (d) Filters learned by DAE; (e) Our results of REHR; (f) Our results of DDAE.}
\label{Fig:ALL}
\end{center}
\end{figure*}

\subsubsection{Reconstruction of Hidden Representation vs. Minimizing Frobenius Norm of Jacobia Matrix}

We conduct two comparison experiments to show the robustness of hidden representation reconstruction against minimizing the Frobenius norm of Jacobia matrix: 1) We compare the results of only using the reconstruction of hidden representation for feature representation with only minimizing the Frobenius norm of Jacobia matrix. 2) We also show the results of adding reconstruction of the input on both reconstruction of hidden representation and minimizing the Frobenius norm of Jacobia matrix. In the second situation, we only illustrate the comparison results of DDAEs and CAEs in practice.

In the first comparison experiment, we use MNIST as the testing dataset. As shown in Table \ref{MNIST:result}, we get the classification error rate of about 1.53\% with only minimizing the reconstruction error of hidden representation for feature representation. However, we can not get the classification error rate with only minimizing the Frobenius norm of Jacobia matrix of hidden representation. It is because the non-convergence problem will appear when we only use minimizing Frobenius norm of Jacobia matrix of hidden representation for feature representation. For the second comparison experiment, we just show the comparison results of DDAEs and CAEs. As we can see from the Tables \ref{result:UCI12} and \ref{Result:iamge}, DDAEs always outperform CAEs. Therefore, we can conclude that not only minimizing the reconstruction error of hidden representation for feature representation is more robust than minimizing the Frobenius norm of Jacobia matrix of hidden representation, but also DDAEs are more robust for feature representation than CAEs.

\subsection{Parameter Evaluation}

\begin{figure*}[htbp]
\begin{minipage}{0.48\linewidth}
  \centerline{\includegraphics[width=6.5cm] {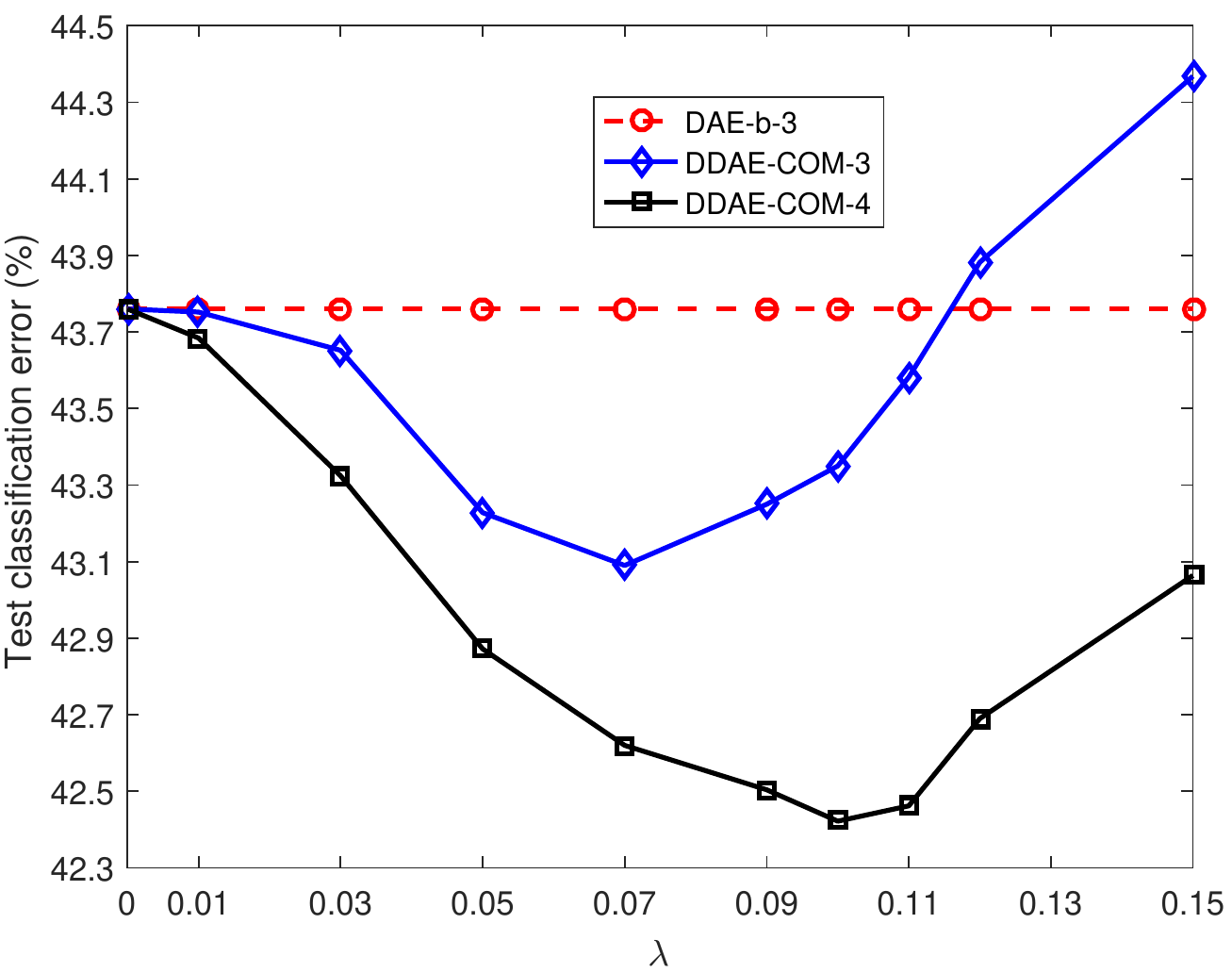}}
  \centerline{(a)}
\end{minipage}
\hfill
\begin{minipage}{.48\linewidth}
  \centerline{\includegraphics[width=6.5cm]{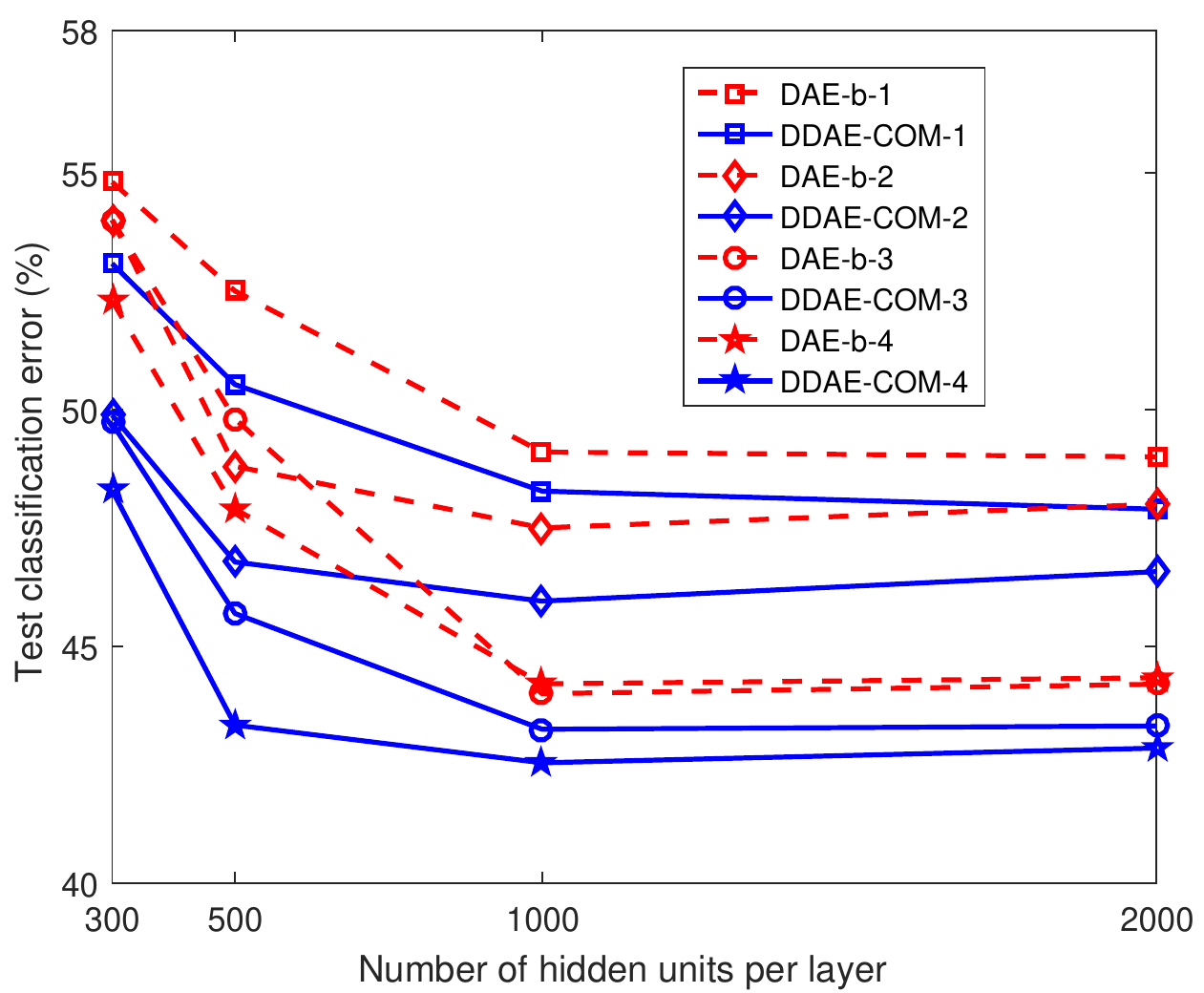}}
  \centerline{(b)}
\end{minipage}
\vfill
\begin{minipage}{0.48\linewidth}
  \centerline{\includegraphics[width=6.5cm]{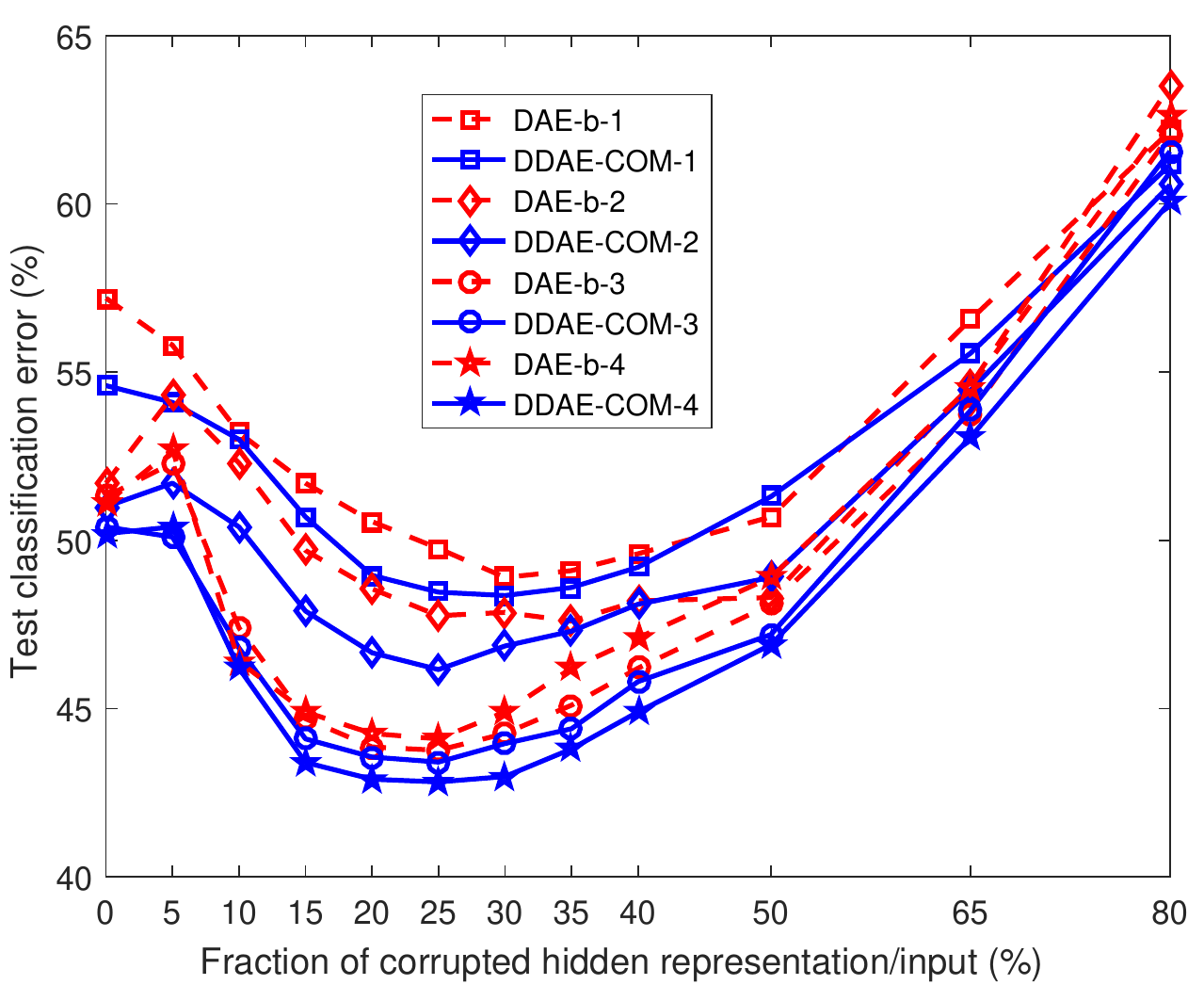}}
  \centerline{(c)}
\end{minipage}
\hfill
\begin{minipage}{0.48\linewidth}
  \centerline{\includegraphics[width=6.5cm]{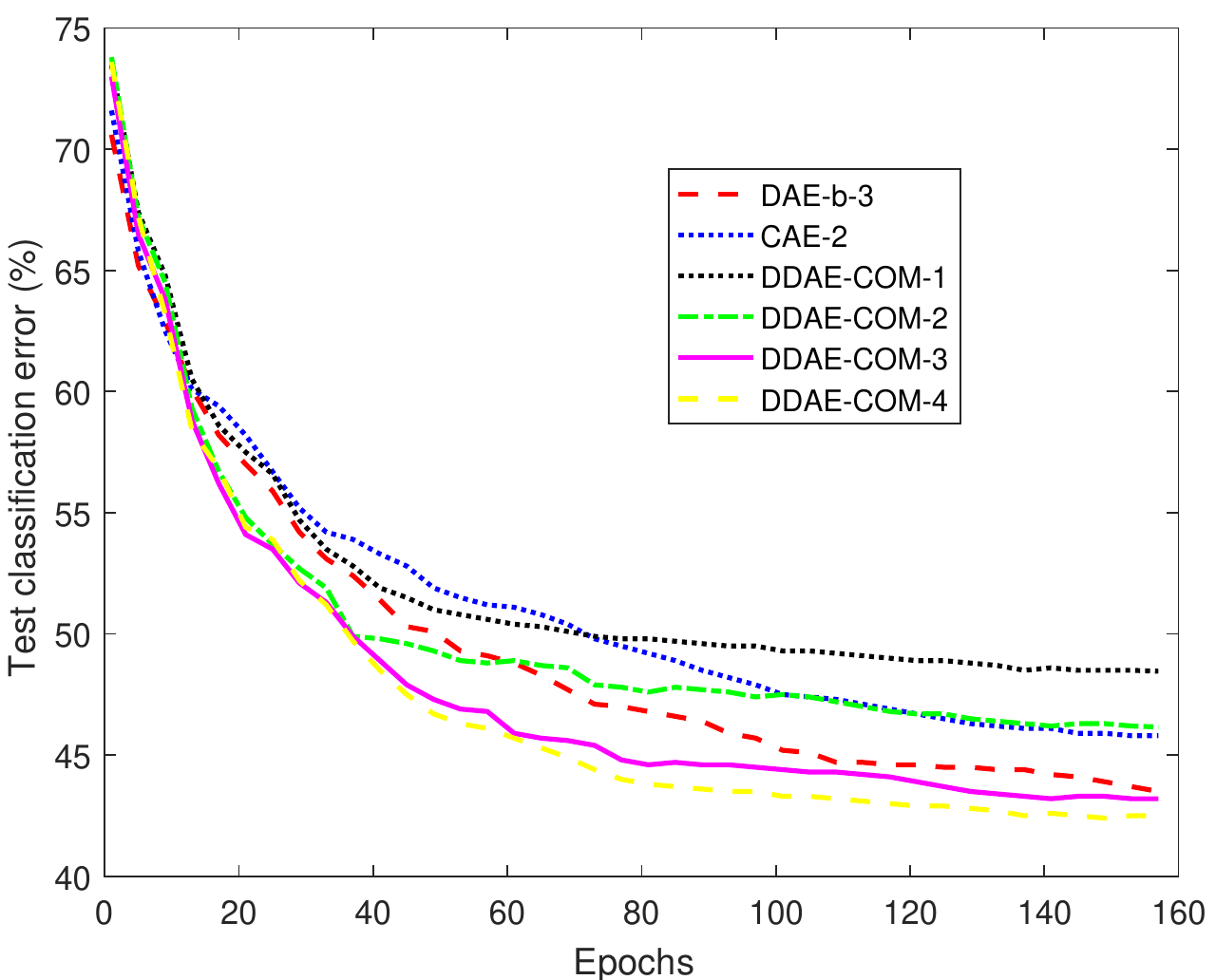}}
  \centerline{(d)}
\end{minipage}
\caption{Experimental results on \textit{bg-img-rot} dataset. (a) Classification error rates on the varying hyper-parameter $\lambda$ in DDAE-COM. DAE-b-3 is a 3 hidden layers stacked DAEs with masking noise and its result is obtained from Vincent et al. \cite{vincent2010stacked}. (b) A comparison of DDAEs and DAEs with increasing the number of hidden layers and the number of hidden units per layer. (c) Sensitivity to the fraction of corrupted hidden representation or input. (d) Test error rates with different training epochs.}
\label{Lamda:Compare}
\end{figure*}

In order to illustrate the effectiveness of hidden representation reconstruction, we evaluate the influence with the varying range of hyper parameters (the number of hidden layers, the number of units per hidden layer, the learning rate for unsupervised pre-training, the learning rate for supervised fine-tuning, etc.). In fact, it is difficult to find the optimal combination of the hyper parameters in a deep network. Fortunately, many researchers have proposed various rules for choosing hyper-parameters in the deep networks \cite{bergstra2011hyper}, \cite{bergstra2012random}, \cite{snoek2012practical}. In our experiments, we refer to the strategies used in \cite{larochelle2007empirical}. We initialize all the parameters with random values, and then fix other hyper parameters and perform a grid search over the range of one hyper parameter by utilizing mini-batch stochastic gradient descent.
%
To show the influence of hyper parameter $\lambda$ in DDAE-COM that controls the tradeoff between Constraints Part and Reconstruction Part, we compare DDAEs and DAEs with the adjustment of $\lambda$. For comparison, we use the \textit{bg-img-rot} as the testing dataset. We fix all other hyper parameters for both models and present the classification error rates on the \textit{bg-img-rot} dataset as shown in Fig. \ref{Lamda:Compare} (a).
Clearly, DDAEs perform better than DAEs when the hyper parameter $\lambda$ is located in a proper scope.

We also contrast DDAEs to DAEs with increasing the number of hidden layers and the number of hidden units per layer to show the influence of double corruption. Fig. \ref{Lamda:Compare} (b) shows the comparative classification error rates on \textit{bg-img-rot} dataset. The results of DDAEs marked blue in Fig. \ref{Lamda:Compare} (b) illustrate that as we increase the number of hidden layers from 1 to 4, the classification error rates gradually descend. It could be that DDAEs have the capability to capture the underlying data-generating distributions of both input and hidden representation while DAEs just capture the distribution of input. Fig. \ref{Lamda:Compare} (b) also shows that DDAEs outperform DAEs, especially when the number of hidden units per layer is low. It may be much easy for DDAEs to capture the underlying data-generating distribution of the hidden representation when the number of hidden representation per layer is low.

To assess the benefit of DDAEs on different corruption levels of the hidden representation or input, we compare the performance of DDAEs and DAEs with different numbers of hidden layers. Fig. \ref{Lamda:Compare} (c) demonstrates the sensitivity to corruption levels of hidden representation or input. We find that some corrupted hidden representation or input is beneficial. Fig. \ref{Lamda:Compare} (d) presents the relationships between the classification error rates and the training epochs. We can see that DDAEs
obtain the same performance of the DAEs or CAEs with much fewer training epochs.

\subsection{Comparisons with State-of-the-art Results}

To further show the robustness of hidden representation reconstruction for feature representation, we compare the performance with some state-of-the-art models on twelve UCI datasets, thirteen image recognition datasets and two human genome sequence datasets.

\subsubsection{UCI Dataset Classification}

We first test the classification performance of DDAEs on twelve UCI datasets. By utilizing DDAE-COM algorithm, we compare DDAEs with SVM model (SVM), stacked Deep Neural Networks (DNNs) \cite{DeepSVM2016}, stacked DeepSVMs \cite{DeepSVM2016}, stacked CAEs \cite{rifai2011contractive} and stacked DAEs \cite{vincent2010stacked}. We use tied weights, sigmoid activation function and squared error of reconstruction loss for the networks of CAEs, DAEs and DDAEs. In DDAEs model, we employ a 2-layer (200-150) architecture as the most frequently used structure. Because the number of features is small for a UCI dataset, we only use the corruption with first and second layers of DAEs and DDAEs, not with the input data.

Table \ref{result:UCI12} presents the classification error rates of 2 hidden layers stacked DDAEs with masking noise (DDAE-b-2), compared with SVM model (SVM), stacked Deep Neural Networks (DNNs), a 4 layers stacked DeepSVMs (DeepSVM-4), a 2 hidden layers stacked CAEs (CAE-2) and a 2 hidden layers stacked DAEs with masking noise (DAE-b-2). In general, networks with minimizing reconstruction error of hidden representation (DDAEs) perform better than other networks without this constraint. It is possible that DDAEs use the reconstruction of input and hidden representation, which may learn the underlying data-generating distributions of both input and hidden representation.

\begin{table}[htbp]
\caption{Classification error rates on UCI datasets with 10-fold cross validations. The best results obtained by all considered models are marked in bold.}
\label{result:UCI12}
\vskip 0.15in
\begin{center}
\begin{small}
\begin{tabular}{lrrrrrr}
\hline
Dataset & SVM & DNNs  & DeepSVM-4 & CAE-2 & DAE-b-2  & DDAE-b-2 \\
\hline
  \textit{sonar} & 14.67$\pm$\tiny{1.86}  & 13.31$\pm$\tiny{1.42}   & -       & 12.91$\pm$\tiny{1.46}  & 11.68$\pm$\tiny{1.41} & \textbf{11.23}$\pm$\tiny{\textbf{1.35}} \\
 \textit{ionosphere} & 12.70$\pm$\tiny{1.24}  & 12.14$\pm$\tiny{2.87} & 9.35$\pm$\tiny{1.44} &  8.87$\pm$\tiny{1.74} &  8.25$\pm$\tiny{1.72} & \textbf{7.86}$\pm$\tiny{\textbf{1.68}} \\
 \textit{ILPD} & 35.26$\pm$\tiny{1.79}    & 34.26$\pm$\tiny{1.67} & -        & 32.76$\pm$\tiny{1.68}  & 31.94$\pm$\tiny{1.58} & \textbf{31.28}$\pm$\tiny{\textbf{1.54}} \\
 \textit{breast\_cancer} & 2.93$\pm$\tiny{1.08}  & 0.79$\pm$\tiny{1.07} & 0.15$\pm$\tiny{3.11} &  0.12$\pm$\tiny{1.05} & 0.08$\pm$\tiny{1.02} & \textbf{0.05}$\pm$\tiny{\textbf{1.02}}\\
 \textit{australian} & 13.06$\pm$\tiny{1.56}  & 12.22$\pm$\tiny{1.75} & 11.02$\pm$\tiny{2.09} &  10.35$\pm$\tiny{1.48} & 10.43$\pm$\tiny{1.54} & \textbf{10.16}$\pm$\tiny{\textbf{1.43}} \\
\textit{diabetes} & 18.68$\pm$\tiny{1.22}   & 13.04$\pm$\tiny{1.78}   &12.48$\pm$\tiny{2.30}   &  11.96$\pm$\tiny{1.75} & 11.25$\pm$\tiny{1.63} & \textbf{10.84}$\pm$\tiny{\textbf{1.58}} \\
 \textit{vehicle} & 13.36$\pm$\tiny{1.25} & 12.85$\pm$\tiny{1.64}    & -       &  12.12$\pm$\tiny{1.65} & 11.83$\pm$\tiny{1.53} & \textbf{11.81}$\pm$\tiny{\textbf{1.47}} \\
\textit{vowel} & 1.83$\pm$\tiny{1.14}     & 1.52$\pm$\tiny{1.46}    & -         & 1.07$\pm$\tiny{1.57}  & 0.84$\pm$\tiny{1.52} &\textbf{0.47}$\pm$\tiny{\textbf{1.42}} \\
 \textit{german\_numer} & 22.40$\pm$\tiny{1.21}  & 15.88$\pm$\tiny{0.99} & 16.30$\pm$\tiny{1.33} &  16.35$\pm$\tiny{1.15} & 15.71$\pm$\tiny{1.12} & \textbf{15.28}$\pm$\tiny{\textbf{1.06}} \\
\textit{cardiotocography} & 21.84$\pm$\tiny{1.58}     & 20.25$\pm$\tiny{1.51}    & -       & 19.82$\pm$\tiny{1.47}  & 18.62$\pm$\tiny{1.35} &\textbf{18.26}$\pm$\tiny{\textbf{1.31}} \\
 \textit{segment} & 3.54$\pm$\tiny{1.26}     & 3.16$\pm$\tiny{1.34}       & -      & 3.07$\pm$\tiny{1.13} &  2.59$\pm$\tiny{1.05} & \textbf{2.17}$\pm$\tiny{\textbf{1.03}} \\
\textit{splice} & 16.66$\pm$\tiny{0.76}   & 7.57$\pm$\tiny{1.92}   & 6.91$\pm$\tiny{2.33}    &  6.48$\pm$\tiny{1.52}  & 5.97$\pm$\tiny{1.42} &\textbf{5.32}$\pm$\tiny{\textbf{1.40}} \\
\hline
\end{tabular}
\end{small}
\end{center}
\vskip -0.1in
\end{table}

\subsubsection{Digit Image Recognition}

After testing on the small UCI dataset classification problem, we compare DDAEs against the several state-of-the-art models for unsupervised feature extraction: SVM models with RBF kernel (SVM$_{rbf}$), stacked Deep Belief Networks (DBNs), stacked Deep Boltzmann Machines (DBMs) \cite{salakhutdinov2009deep}, stacked AEs (SAEs) \cite{rifai2011contractive}, stacked DAEs \cite{vincent2010stacked}, stacked CAEs \cite{rifai2011contractive}, stacked Rectified Factor Networks (RFNs) \cite{RFN2015} and Ladder Networks \cite{ladder-networks2015,Ladder2-2016}. All these models also adopt tied weights, sigmoid activation function for both encoder and decoder, and cross-entropy reconstruction loss except DBNs, DBMs and RFNs. DBNs and DBMs optimize the parameters by using contrastive divergence, while RFNs use the expectation-maximization algorithm. Stochastic gradient descent is applied as the optimization method for all these models.

The classification results and training time of DDAEs with other models on MNIST dataset are listed in Table \ref{MNIST:result}. By using zero-masking corruption noises (MN) and DDAE-COM algorithm, DDAEs with 3 layers can achieve an error rate of about 1.35\%, while the traditional DAEs is about 1.57\%. When using Gaussian corruption noises (GS), the test error of 3-layer DDAEs reduces to 1.12\% with DDAE-COM algorithm and 1.08\% with DDAE-SEP algorithm. With the well-known trick technique, dropout \cite{srivastava2014dropout}, the test error of 3-layer DDAEs can further reduce to about 0.69\% when training with DDAE-COM algorithm and about 0.66\% when training with DDAE-SEP algorithm. In the experiments, a 3-layer (1000-1000-2000) and a 4-layer (1000-1000-2000-1000) architecture are tested. All the hyperparameters are selected according to the performance on the validation set. As for dropout, we use a fixed dropout rate 20\% for all the input layers and the hidden layers. A momentum, which increases from 0.5 to 0.9, is adopted to speed up learning. A fixed learning rate of 4.0 is used and no weight decay is utilized.

\begin{table}[htbp]
\caption{Test error (in \%) and training time (seconds) of different models on MNIST.}
\label{MNIST:result}
\begin{center}
\begin{tabular}{lll}
\hline
Methods & Error (\%) & Training times (s) \\
\hline
SAE  \cite{rifai2011contractive}                                        & 1.78  &  39458.42 \\
DAE + MN  \cite{rifai2011contractive}                                        & 1.57  &  40865.81 \\
REHR +  3 layers                                       & 1.53    &  47579.63\\
SVM$_{rbf}$   \cite{vincent2010stacked}                                    & 1.40  &  - \\
DDAE-COM + MN + 3 layers                          & 1.35   &  49754.38 \\
RFNs   \cite{RFN2015}                                                 & 1.27    & - \\
DAE + GS   \cite{rifai2011contractive}                                        & 1.18  &  41459.54 \\
DBN        \cite{srivastava2014dropout}                                      & 1.18   &  57863.67\\
CAE        \cite{rifai2011contractive}                                        & 1.14    &  128786.97 \\
DDAE-COM + GS + 3 layers                          & 1.12 &  50362.85  \\
DDAE-SEP + GS + 4 layers                          & 1.09   &  69493.87  \\
DDAE-SEP + GS + 3 layers                          & 1.08  &  51976.26  \\
DBM        \cite{srivastava2014dropout}                                        & 0.96  &  129763.45 \\
DBN + dropout finetuning   \cite{srivastava2014dropout}                       & 0.92   &  57863.67  \\
DBM + dropout finetuning   \cite{srivastava2014dropout}                       & 0.79   &  129763.45  \\
DDAE-COM + GS + 3 layers + dropout finetuning        & 0.69  &  50362.85 \\
DDAE-SEP + GS + 3 layers + dropout finetuning        &   \textbf{0.66}   &  51976.26  \\
\hline
\end{tabular}
\end{center}
\end{table}

\begin{table}[htbp]
\caption{Comparison of stacked DDAEs with other models. Test error rate on all considered datasets is reported together with a 95\% confidence interval. The best results obtained by all these models are marked in bold. With DDAE-COM algorithm, DDAEs appear to achieve superior or equivalent to the best other model in ten out of twelve data sets with 3 layers (DDAE-3) and eleven out of twelve data sets with 4 layers (DDAE-4).}
\label{Result:iamge}
\vskip 0.15in
\begin{center}
\begin{small}
\begin{tabular}{@{}l@{}rrrrrrr@{}r@{}}
\hline
Dataset & SVM$_{rbf}$ & DBN-3 & DAE-b-3 & CAE-2 & RFNs & DDAE-3 & DDAE-4 \\
\hline
\textit{rect}       & 2.15$\pm$\tiny{0.13} & 2.60$\pm$\tiny{0.14} & 1.99$\pm$\tiny{0.12} & 1.21$\pm$\tiny{0.10} & 0.63$\pm$\tiny{0.06} & 0.65$\pm$\tiny{0.06} & \textbf{0.56}$\pm$\tiny{\textbf{0.06}}\\
\textit{rect-img}   & 24.04$\pm$\tiny{0.37} & 22.50$\pm$\tiny{0.37} & 21.59$\pm$\tiny{0.36} & 21.54$\pm$\tiny{0.36} & 20.77$\pm$\tiny{0.36} & 20.68$\pm$\tiny{0.36}  & \textbf{20.56}$\pm$\tiny{\textbf{0.36}}\\
\textit{convex}     & 19.13$\pm$\tiny{0.34} & 18.63$\pm$\tiny{0.34} & 19.06$\pm$\tiny{0.34} & -              & 16.41$\pm$\tiny{0.32} & 16.24$\pm$\tiny{0.32} & \textbf{15.78}$\pm$\tiny{\textbf{0.31}}\\
\textit{basic}      & 3.03$\pm$\tiny{0.15} & 3.11$\pm$\tiny{0.15} & 2.84$\pm$\tiny{0.15} & 2.48$\pm$\tiny{0.14} & 2.66$\pm$\tiny{0.14} & \textbf{2.43}$\pm$\tiny{\textbf{0.14}} & 2.45$\pm$\tiny{0.14}\\
\textit{rot}        & 11.11$\pm$\tiny{0.28} & 10.30$\pm$\tiny{0.27} & 9.53$\pm$\tiny{0.26} & 9.66$\pm$\tiny{0.26} & -              & 9.15$\pm$\tiny{0.25} & \textbf{9.08}$\pm$\tiny{\textbf{0.25}}\\
\textit{bg-rand}    & 14.58$\pm$\tiny{0.31} & \textbf{6.73}$\pm$\tiny{\textbf{0.22}} & 10.30$\pm$\tiny{0.27} & 10.90$\pm$\tiny{0.27} & 7.94$\pm$\tiny{0.24} & 10.18$\pm$\tiny{0.27} & 10.19$\pm$\tiny{0.27}\\
\textit{bg-img}     & 22.61$\pm$\tiny{0.37} & 16.31$\pm$\tiny{0.32} & 16.68$\pm$\tiny{0.33} & 15.50$\pm$\tiny{0.32} & 15.66$\pm$\tiny{0.32} & \textbf{14.49}$\pm$\tiny{\textbf{0.31}} & 14.51$\pm$\tiny{0.31}\\
\textit{bg-img-rot } & 55.18$\pm$\tiny{0.44} & 47.39$\pm$\tiny{0.44} & 43.76$\pm$\tiny{0.43}& 45.23$\pm$\tiny{0.44} & -               & 43.41$\pm$\tiny{0.43} & \textbf{42.82}$\pm$\tiny{\textbf{0.43}}\\
\textit{NORB } & 11.60$\pm$\tiny{0.40} & -  & 9.50$\pm$\tiny{0.37}& 9.85$\pm$\tiny{0.42} & 7.00$\pm$\tiny{0.32}               & 6.98$\pm$\tiny{0.37} & \textbf{6.85}$\pm$\tiny{\textbf{0.36}}\\
\textit{CIFAR-10} & 62.70$\pm$\tiny{0.95} & 43.38$\pm$\tiny{0.97} & 43.76$\pm$\tiny{0.94} & 42.85$\pm$\tiny{0.94} & 41.29$\pm$\tiny{0.95}  & 40.71$\pm$\tiny{0.92} & \textbf{40.25}$\pm$\tiny{\textbf{0.92}}\\

\textit{COIL-100} & 18.68$\pm$\tiny{0.82} & 16.15$\pm$\tiny{0.78} & 15.42$\pm$\tiny{0.71} & 15.37$\pm$\tiny{0.74} & -
& 14.92$\pm$\tiny{0.68} & \textbf{14.36}$\pm$\tiny{\textbf{0.68}}\\

\textit{Caltech-101} & 28.93$\pm$\tiny{0.87} & 26.45$\pm$\tiny{0.85} & 25.27$\pm$\tiny{0.79} & 25.22$\pm$\tiny{0.81} &  -
& 24.68$\pm$\tiny{0.76} & \textbf{24.62}$\pm$\tiny{\textbf{0.76}}\\
\hline
\end{tabular}
\end{small}
\end{center}
\vskip -0.1in
\end{table}

By using DDAE-COM algorithm, we also test our new model on the ten deep learning benchmark datasets with 3 and 4 layers (DDAE-3 and DDAE-4). Table \ref{Result:iamge} reports the resulting classification performance for our model (DDAE-3 and DDAE-4), together with the performance of SVMs with RBF kernel, a 3 hidden layers stacked Deep Belief Networks (DBN-3), a 3 hidden layers stacked DAEs with masking noise (DAE-b-3), a 2 hidden layers stacked CAEs (CAE-2) and a stacked Rectified Factor Networks (RFNs). As we can see from the table, our new model works remarkably well on all datasets. It is better than or equivalent to the state-of-the-art models in 10 out of 12 datasets with three layers and 11 out of 12 datasets with four layers. It should be pointed out that we give the same corrupting noise level to both input and hidden representation in each experiment.

\subsubsection{Human Genome Sequence Classification}

In order to further demonstrate the effectiveness of DDAEs, we evaluate the performance on two human genome sequence datasets. For these two sequence datasets, we also use the 10-fold cross validation and present the average error rates with 10 runs. Fig. \ref{DNA} illustrates the performance of stacked SAEs (SAE-2), stacked REHR (REHR-2), stacked CAEs (CAE-2), stacked DAEs with masking noise (DAE-b-2) and  stacked DDAEs with masking noise (DDAE-b-2). Note that all the models use 2 hidden layers. With DDAE-b-2 model, we obtain the classification error rates of 8.36\% for Acceptor dataset and 9.85\% for Donor dataset, the best results of five compared models in these experiments.

\begin{figure}[htbp]
\begin{center}
\centerline{\includegraphics[width=7cm]{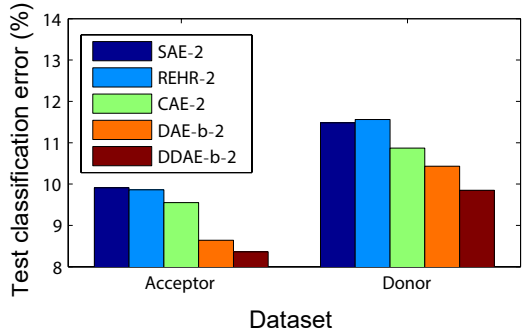}}
\caption{Experimental results on genome sequence dataset.}
\label{DNA}
\end{center}
\end{figure}

\section{Conclusion and Future Work}

In this paper, we demonstrated that the reconstruction error of the input has a lower bound and minimizing the Frobenius norm of Jacobia matrix of hidden representation has a deficiency and may encourage to get a much worse local optimum value. Based on these evidences, a new deep neural network, DDAEs, for unsupervised representation learning was proposed by using the idea of learning invariant and robust features for the small change on both input and hidden representation. The idea was implemented by minimizing the reconstruction error after injecting noises into both input and hidden representation. It is shown that our model is flexible and extendible. It is also demonstrated that minimizing the reconstruction error of hidden representation for feature representation is more robust than minimizing the Frobenius norm of Jacobia matrix of hidden representation. The comprehensive experiments indicated that DDAEs can achieve the performance superior to the existing state-of-the-art models on twelve UCI datasets and two human genome sequence datasets. For the thirteen image recognition datasets, DDAEs were better than or equivalent to most state-of-the-art models. Since the reconstruction of the hidden representation always helps an auto-encoder to perform better, and competes or improves upon the representations learning, how to design a useful constraint or any other operations on the hidden representation for the development of a more efficient representation learning model would be an interesting extension in further studies.

\bibliographystyle{ACM-Reference-Format}
\bibliography{ddae}

\end{document}